\documentclass{article}
\usepackage{PRIMEarxiv}
\usepackage[utf8]{inputenc}
\usepackage[T1]{fontenc}
\usepackage{microtype}
\usepackage{fontawesome5}
\usepackage[numbers,square,sort]{natbib}
\setcitestyle{citesep={,}}
\makeatletter
\AtBeginDocument{\let\NAT@space\@empty}
\makeatother
\usepackage{fancyhdr}
\date{}\makeatletter\renewcommand{\@toptitlebar}{\hrule height 3pt\vskip 0.25in\vskip -\parskip}\renewcommand{\@bottomtitlebar}{\vskip 0.29in\vskip -\parskip\hrule height 0.5pt\vskip 0.09in}\makeatother

\usepackage{amsmath,amsfonts,bm}

\def\eqref#1{equation~\ref{#1}}

\def\1{\bm{1}}

\DeclareMathAlphabet{\mathsfit}{\encodingdefault}{\sfdefault}{m}{sl}
\SetMathAlphabet{\mathsfit}{bold}{\encodingdefault}{\sfdefault}{bx}{n}

\newcommand{\E}{\mathbb{E}}

\usepackage{amsmath,amssymb,amsthm}
\usepackage{booktabs}
\usepackage{colortbl}
\usepackage{arydshln}
\usepackage{flafter}
\usepackage[section]{placeins}
\usepackage{needspace}
\newcommand{\finishwrap}{%
  \par
  \ifnum\value{WF@wrappedlines}>1
    \vspace{\dimexpr\value{WF@wrappedlines}\baselineskip-\baselineskip\relax}%
  \fi
  \WFclear
}
\usepackage{graphicx}\usepackage{tikz}\usetikzlibrary{positioning,arrows.meta}\usepackage{wrapfig}\usepackage{capt-of}
\usepackage{xcolor}
\definecolor{citationblue}{HTML}{006D9A}
\usepackage[colorlinks=true,linkcolor=black,citecolor=citationblue,urlcolor=citationblue]{hyperref}
\usepackage{topconf-callouts}
\usepackage{url}
\usepackage{algorithm}
\usepackage{algpseudocode}

\newtheorem{lemma}{Lemma}
\newtheorem{proposition}{Proposition}

\title{DiffusionART: Efficient, Stable and Scalable Reinforcement of Diffusion Models}

\author{%
\begin{minipage}{0.98\textwidth}\centering\normalfont\normalsize
{\bfseries Zhenglin Wan$^{1,2}$ \quad Jiayu Chen$^{2,\dagger}$ \quad
Zhengfeiyang Zhang$^{1}$ \quad Pengfei Zhou$^{1}$ \quad Wangbo Zhao$^{3}$}\\[0.25em]
{\bfseries Fangyuan Kong$^{2}$ \quad Pengfei Wan$^{2}$ \quad
Kun Gai$^{2}$ \quad Yang You$^{1}$}\\[0.4em]
$^{1}$HPC-AI Lab, National University of Singapore \quad
$^{2}$Kling AI \quad $^{3}$HKUST\\[0.3em]
{\small \textcolor{black}{\faEnvelope}\ Email: \href{mailto:vanzl@u.nus.edu}{\nolinkurl{vanzl@u.nus.edu}}, \href{mailto:jia768167535@gmail.com}{\nolinkurl{jia768167535@gmail.com}}}\\[0.1em]{\small\faGithub\ GitHub: \url{https://github.com/vanzll/DiffusionART}}\\[0.1em]
{\small\textcolor[HTML]{F97316}{\faFire} Project Page: \url{https://vanzll.github.io/diffusionart-project/}}
\end{minipage}%
}

\newcommand{\method}{DiffusionART}
\newcommand{\rms}{\operatorname{RMS}}

\begin{document}
\addtocontents{toc}{\protect\setcounter{tocdepth}{-1}}
\pagestyle{fancy}
\raggedbottom

\maketitle\vspace{-\baselineskip}
\begingroup
\renewcommand{\thefootnote}{\ensuremath{\dagger}}
\footnotetext[1]{Corresponding author.}
\endgroup

\begin{abstract}
Online reinforcement learning (RL) fine-tunes diffusion models with black-box rewards, but reverse-process methods such as Flow-GRPO are limited in efficiency by poor credit assignment, while forward-process methods such as DiffusionNFT are efficient yet unstable without carefully tuned exponential-moving-average (EMA) off-policy training. We show that NFT-style objectives contain \textcolor[HTML]{007F88}{\textbf{a hidden forward--reverse consistency force}} that joint positive--negative training cancels only partially, and that EMA stabilizes training through \textcolor[HTML]{007F88}{\textbf{restoration}} rather than off-policy rollouts. We introduce \textcolor[HTML]{D55E00}{\textbf{\method{}}} (\underline{Diffusion} Reinforcement with \underline{A}ggregation-First \underline{R}eward \underline{T}ilting), which aggregates each prompt's rewards into a shared reward-improving direction for reverse-process states, removing the consistency force by construction, and adds a trust-region-masked second update for drift-adaptive restoration without EMA tuning. It requires no differentiable reward model, remains effective with fewer samples per prompt, and supports pairwise reward models (DiffusionART-Pairwise). Across image and video generation, \method{} trains efficiently and stably without per-task tuning. In particular, on HunyuanVideo-13B with equally weighted visual and motion quality rewards, DiffusionART reaches the late-stage mean reward levels of DiffusionNFT and Flow-GRPO-Fast with $12.02\times$ and $11.72\times$ fewer GPU hours, respectively, while achieving 28.8\% and 53.0\% higher mean training rewards.
\end{abstract}

\begin{figure*}[h!]
\centering

\includegraphics[width=0.8\textwidth]{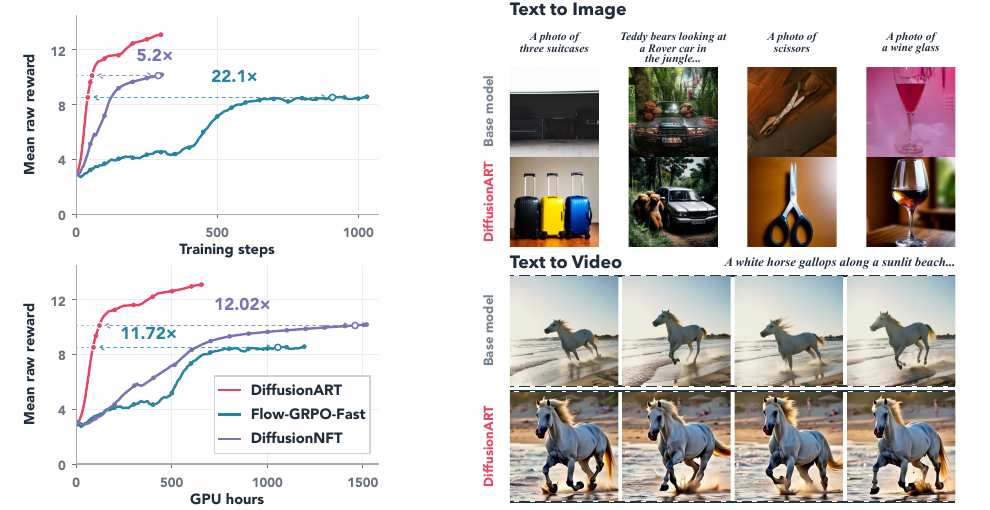}
\caption{\textbf{Efficient learning and improved generations.} \textbf{Left:} Mean of raw VQ and MQ (equal weights). \textbf{Right:} Selected same-prompt images and video frames before and after DiffusionART.}
\label{fig:teaser}
\end{figure*}

\section{Introduction}
\label{sec:introduction}

Visual diffusion and flow-matching models\footnote{We use ``diffusion'' as a general term. Our paper targets rectified-flow with linear data--noise interpolation~\citep{lipman2022flowmatching}, which shares the core principles of diffusion.} are typically trained to fit the data distribution by regressing noise or velocity targets rather than to optimize downstream criteria such as visual quality, motion quality, or human preference, making reward-guided fine-tuning central to recent image and video generation tasks~\citep{xu2026qwenimage2rl,gao2025seedance}. We focus on online diffusion Reinforcement Learning (RL), which, unlike differentiable reward-model optimization such as ReFL~\citep{xu2023imagereward} or offline preference learning such as Diffusion-DPO~\citep{wallace2023diffusiondpo}, evaluates clean samples from the evolving generator and can use black-box rewards. Because these rewards arrive only after a complete output is generated, online diffusion RL faces a credit-assignment problem: costly output-level feedback must be translated into effective updates across many denoising states~\citep{liu2025flowgrpo,zheng2025diffusionnft}.

Within this paradigm, \emph{reverse-process RL}~\citep{black2023ddpo}, such as Flow-GRPO and its variants~\citep{liu2025flowgrpo,xue2025dancegrpo,he2025tempflowgrpo,wang2026grpoguard}, optimizes sampled denoising transitions via group-based endpoint reward feedback. It trains on states actually visited by the generator but requires ODE-to-SDE conversion for tractable likelihoods, and its policy gradient updates each selected transition along its own transition-level sampling noise, weighted by a trajectory-level advantage that this transition only partly determines, resulting in poor credit assignment and limiting training efficiency. \emph{Forward-process RL}, such as DiffusionNFT~\citep{zheng2025diffusionnft}, instead applies forward noising to the generated clean samples and regresses velocities with reward-weighted positive and negative objectives. It needs no transition likelihoods and is highly efficient, but it is unstable without two stabilizers: joint positive--negative training and an exponential moving average (EMA) rollout policy whose lag must be tuned for each task~\citep{zheng2025diffusionnft}. Neither family is thus both efficient and stable without per-task tuning. In this paper, we focus on studying the instability of forward-process RL and developing a scalable algorithm which is both stable and efficient.

We first examine NFT-style training, starting from a puzzle: its positive and negative branches share the same ideal reward-improving direction, so either alone should suffice, yet each alone collapses (Figure~\ref{fig:nft_branch_balance}(a)). We find that NFT's targets, built from forward-noised endpoints, implicitly assume that the model's sampling velocity equals the ideal forward velocity induced by its own endpoints. The two differ in practice, and the gap introduces a hidden non-reward-improving \emph{forward--reverse consistency force}: the positive branch pulls each state's velocity toward the ideal forward velocity and the negative branch pushes it away. Joint training can cancel this force but only partially, since reward centering balances rewards across samples rather than at each state. The residual can destabilize training and slow the reward optimization. We further show that EMA stabilizes NFT through the \emph{restoration force} of its lagged reference, not through off-policy rollouts, but the EMA trick must be tuned per task.

Driven by these insights, we introduce \textbf{DiffusionART} (\underline{Diffusion} Reinforcement with \underline{A}ggregation-First \underline{R}eward \underline{T}ilting). It borrows supervised velocity regression from forward-process training but applies updates to states and actions during reverse-process rollouts. It first aggregates each prompt's samples and rewards into a shared reward-improving direction $G(c)$, then regresses each visited state's action toward its guided target (Section~\ref{sec:art_guidance}). As this target never involves the forward velocity, the consistency force is removed by construction rather than by cross-sample cancellation, and each state receives an explicit reward-improving direction instead of a scalar advantage passed through likelihood ratios. A second update with a trust-region mask mechanism on each rollout adds a restoration force proportional to each state's drift, replacing EMA with no schedule to tune (Section~\ref{sec:art_restoration}). We summarize our main contributions as follows:

\TopConfRenderCallout{TopConfInsightBlue}{1.5mm}
{\textbf{Main Contributions}\\[3pt]}
{
\noindent\textbf{1. Diagnosing the instability of NFT-style training.} The NFT objective implicitly contains a harmful forward--reverse consistency force that joint training cancels only partially (Section~\ref{sec:nft_failure_modes}).\par\smallskip
\noindent\textbf{2. Identifying restoration as the key stabilizer.} EMA stabilizes NFT through restoration rather than off-policy data, but requires per-task tuning   (Section~\ref{sec:ema_restoration}).\par\smallskip
\noindent\textbf{3. Aggregation-first reward tilting with adaptive restoration.} DiffusionART removes the consistency force by construction, eliminating the need for EMA with drift-adaptive restoration (Section~\ref{sec:method}).  \par\smallskip
\noindent\textbf{4. Efficient and stable training.}
DiffusionART matches the strongest baseline's efficiency on SD3.5-M without EMA or per-task tuning.
Under VQ:MQ $=1{:}1$ weights on HunyuanVideo-13B, it reaches the late-stage mean VQ and MQ rewards of DiffusionNFT and Flow-GRPO-Fast with $12.02\times$ and $11.72\times$ fewer GPU hours, while achieving 28.8\% and 53.0\% higher mean training rewards, respectively.
For image-to-video generation, it reaches MQ 6 with $4.8\times$ fewer GPU hours than Flow-GRPO-Fast (Section~\ref{sec:experiments}).

}

\section{Preliminaries}
\label{sec:preliminaries}

Diffusion RL~\citep{xu2026pathspace} includes reverse-process policy optimization, such as Flow-GRPO~\citep{liu2025flowgrpo}, and forward-process optimization, such as DiffusionNFT~\citep{zheng2025diffusionnft,zhu2026crd,xue2026advantageweightedmatchingaligning}. We focus on DiffusionNFT and defer Flow-GRPO to Appendix~\ref{app:flowgrpo_objective}. At each iteration, for each prompt $c$, DiffusionNFT samples $K$ clean images or videos $x_0$ from a frozen rollout policy $p_{\mathrm{old}}(x_0\mid c)$; we call these generated samples \emph{endpoints}. DiffusionNFT maps each raw reward $R_i:=R(x_0^{(i)},c)$ to a \emph{goodness} weight $r_i:=r(x_0^{(i)},c)=\tfrac12[1+\operatorname{clip}((R_i-\bar R_c)/Z_c,-1,1)]\in[0,1]$, where $\bar R_c=K^{-1}\sum_iR_i$ is the prompt's mean reward and $Z_c>0$ controls the scale; without clipping, $\sum_i(2r_i-1)=0$. Weighting $p_{\mathrm{old}}$ by $r$ and $1-r$ induces the positive and negative endpoint distributions $p_+(x_0\mid c)\propto r(x_0,c)p_{\mathrm{old}}(x_0\mid c)$ and $p_-(x_0\mid c)\propto[1-r(x_0,c)]p_{\mathrm{old}}(x_0\mid c)$. DiffusionNFT is motivated by the result that the ideal velocities induced by forward noising the positive, original, and negative endpoint distributions are collinear. Moving toward the positive field and away from the negative field therefore follows the same ideal improvement direction.

We parameterize generation from $t=1$ (noise) to $t=0$ (clean samples), so velocity points along the denoising direction. For a forward state $x_t=\alpha_t x_0+\sigma_t\epsilon$, let $u_t=-\dot{\alpha}_t x_0-\dot{\sigma}_t\epsilon$ denote its sample velocity target, where dots denote derivatives with respect to $t$; standard rectified-flow interpolation~\citep{lipman2022flowmatching,liu2022flowstraightfastlearning} gives $u_t=x_0-\epsilon$. Let $v_{\mathrm{old}}:=v_{\theta_{\mathrm{old}}}(x_t,t,c)$ and $v_\theta(x_t,t,c)$ be the velocities predicted by the rollout policy $p_{\mathrm{old}}$ (frozen within an iteration) and the trainable policy, respectively, and let $\mathcal D_{\mathrm{old}}$ be the training distribution of $(c,x_0,r,t,\epsilon)$ with $x_0\sim p_{\mathrm{old}}(\cdot\mid c)$ and $\epsilon\sim\mathcal N(0,I)$. For $\beta>0$, DiffusionNFT constructs $v_{\theta,\pm}=v_{\mathrm{old}}\pm\beta(v_\theta-v_{\mathrm{old}})$ and minimizes $\mathcal L_{\mathrm{NFT}}=\mathbb E_{\mathcal D_{\mathrm{old}}}[r\lVert v_{\theta,+}-u_t\rVert_2^2+(1-r)\lVert v_{\theta,-}-u_t\rVert_2^2]$. Equivalently, let $a=(2r-1)/\beta$ and define the frozen target $v_{\mathrm{tgt}}=a u_t+(1-a)v_{\mathrm{old}}$. Then
\begin{equation}
\label{eq:nft_objective}
\mathcal L_{\mathrm{NFT}}=\beta^2\mathbb E_{\mathcal D_{\mathrm{old}}}\!\left[\lVert v_\theta-v_{\mathrm{tgt}}\rVert_2^2\right]+\mathrm{const},
\end{equation}
where $\mathrm{const}$ is independent of $\theta$. Notably, NFT retains a $v_{\mathrm{old}}$ anchor in $v_{\mathrm{tgt}}$. At the on-policy initialization $v_\theta=v_{\mathrm{old}}$, each sample's output-space descent direction is proportional to $(2r-1)(u_t-v_{\mathrm{old}})$, so high- and low-reward samples drive the same regression residual in opposite directions (see Appendix~\ref{app:nft_mse_equivalence} for both derivations). After optimizing the rollout batch, DiffusionNFT updates the rollout policy by exponential moving average (EMA): $\theta_{\mathrm{old}}\leftarrow\eta\theta_{\mathrm{old}}+(1-\eta)\theta$.

\section{Understanding NFT-style Diffusion Reinforcement Learning}
\label{sec:motivation}
DiffusionNFT is highly efficient, but its stability relies on joint positive--negative training and an off-policy EMA rollout policy~\citep{zheng2025diffusionnft}. We examine why both are needed and where they fall short: joint training only partially cancels a hidden forward--reverse consistency force, while EMA provides restoration whose strength requires task-specific tuning. These limitations motivate the two components of DiffusionART in Section~\ref{sec:method}.

\subsection{Joint Positive--Negative Training: Partial Cancellation of a Hidden Consistency Force}

\label{sec:nft_failure_modes}

\begin{figure*}[t]
\centering

\makebox[\textwidth][c]{
\includegraphics[width=0.8\textwidth]{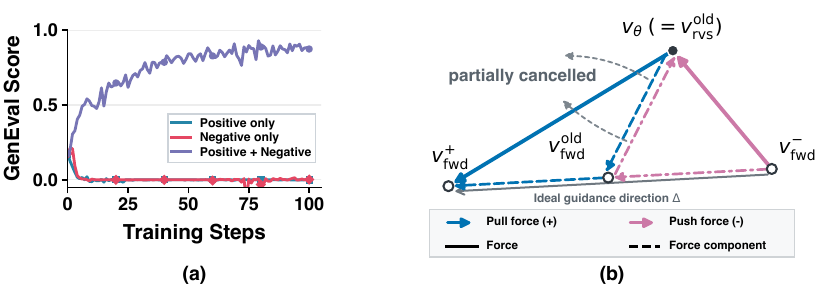}
}

\caption{(a) Ablation results for positive and negative objectives. (b) The positive and negative objectives partially cancel the consistency force.}
\label{fig:nft_branch_balance}
\end{figure*}

We first remove EMA to isolate joint training, so $v_\theta=v_{\mathrm{old}}$ at every on-policy update. Consistent with \citet{zheng2025diffusionnft}, either branch alone collapses, whereas joint training learns effectively (Figure~\ref{fig:nft_branch_balance}(a)). This is puzzling: the ideal positive and negative guidance directions are collinear, suggesting that either branch alone should suffice.

To make this precise, for $b\in\{+,\mathrm{old},-\}$, let $P_b$ denote the endpoint distribution $p_b(x_0\mid c)$, and let $Q_b$ denote the joint \textbf{forward} distribution of $(x_0,\epsilon,x_t,u_t)$ obtained by drawing $x_0\sim P_b$ and $\epsilon\sim\mathcal N(0,I)$ independently and forming $x_t$ and $u_t=x_0 - \epsilon$ by forward noising. With $s_t=(x_t,t,c)$, the \emph{ideal forward velocity} induced by $P_b$ is $v_{\mathrm{fwd}}^b(s_t):=\E_{Q_b}[u_t\mid s_t]$. We write $v_{\mathrm{fwd}}^{\mathrm{old}}$ for $b=\mathrm{old}$, and $v_{\mathrm{rvs}}^{\mathrm{old}}:=v_{\mathrm{old}}$ for the field that the reverse sampler actually uses to generate the endpoint distribution.

\begin{lemma}[Ideal NFT guidance is conditional reward covariance]
\label{lem:nft_covariance}
At a fixed state $s_t$, let $Q_{\mathrm{old}}$ denote $Q_b$ with $b=\mathrm{old}$, and let
$q:=\E_{Q_{\mathrm{old}}}[r(x_0,c)\mid s_t]$ and
$g:=\operatorname{Cov}_{Q_{\mathrm{old}}}
[r(x_0,c),\allowbreak u_t\mid s_t]$. If the tilted distributions and moments exist
and $0<q<1$, then
\begin{align}
q\bigl(v_{\mathrm{fwd}}^+-v_{\mathrm{fwd}}^{\mathrm{old}}\bigr)
&=(1-q)\bigl(v_{\mathrm{fwd}}^{\mathrm{old}}
-v_{\mathrm{fwd}}^-\bigr)=g,
\label{eq:nft_conditional_covariance}
\end{align}
where all velocities are evaluated at $s_t$. Hence the positive and negative
branches point along the same reward-tilt direction $g$.
\end{lemma}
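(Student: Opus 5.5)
The plan is to express the tilted forward joint distributions $Q_\pm$ as reweightings of $Q_{\mathrm{old}}$ and then compute the conditional expectations of $u_t$ at the fixed state $s_t$.

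\textbf{Step 1: reweighting.} Since $p_+(x_0\mid c)\propto r(x_0,c)\,p_{\mathrm{old}}(x_0\mid c)$ and $\epsilon$ is drawn independently of $x_0$, with $x_t,u_t$ deterministic functions of $(x_0,\epsilon)$, the joint law $Q_+$ has density $r(x_0,c)/\E_{P_{\mathrm{old}}}[r]$ with respect to $Q_{\mathrm{old}}$. Conditioning on $s_t$ (Bayes' rule) makes the global normalizer cancel. What remains is the \emph{conditional} normalizer, so the conditional law of $(x_0,\epsilon)$ given $s_t$ under $Q_+$ has density $r/q$ relative to $Q_{\mathrm{old}}(\cdot\mid s_t)$. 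Likewise $Q_-(\cdot\mid s_t)$ has density $(1-r)/(1-q)$. The hypothesis $0<q<1$ is exactly what makes both conditional tilts well defined. One needs some care that the marginal density of $x_t$ under $Q_\pm$ is positive wherever it is evaluated; since $x_t$ has Gaussian noise, this holds whenever $q\in(0,1)$.

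\textbf{Step 2: compute the velocities.} By Step 1,
$v_{\mathrm{fwd}}^+=\E_{Q_{\mathrm{old}}}[r\,u_t\mid s_t]/q$ and $v_{\mathrm{fwd}}^-=\E_{Q_{\mathrm{old}}}[(1-r)\,u_t\mid s_t]/(1-q)$. Multiplying through gives
\[
q\bigl(v_{\mathrm{fwd}}^+-v_{\mathrm{fwd}}^{\mathrm{old}}\bigr)=\E_{Q_{\mathrm{old}}}[r u_t\mid s_t]-q\,\E_{Q_{\mathrm{old}}}[u_t\mid s_t]=g.
\]
Similarly,
\[
(1-q)\bigl(v_{\mathrm{fwd}}^{\mathrm{old}}-v_{\mathrm{fwd}}^-\bigr)=(1-q)\E_{Q_{\mathrm{old}}}[u_t\mid s_t]-\E_{Q_{\mathrm{old}}}[u_t\mid s_t]+\E_{Q_{\mathrm{old}}}[r u_t\mid s_t],
\]
which also equals $g$. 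The collinearity claim follows immediately, because both differences are positive multiples of $g$.

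\textbf{Main obstacle.} The only real subtlety is Step 1, namely justifying that conditioning on $s_t$ commutes with the endpoint tilt and produces the state-dependent normalizer $q$ rather than the global one. I would handle this by writing joint densities explicitly: $p_b(x_0\mid c)\,\mathcal N(\epsilon)$, and hence $p_b(x_0\mid c)\,\mathcal N\bigl((x_t-\alpha_t x_0)/\sigma_t\bigr)/\sigma_t^d$ for $(x_0,x_t)$. I would then take ratios. The existence assumptions in the lemma cover integrability of $r\,u_t$.
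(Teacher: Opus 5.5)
Your proposal is correct and follows the paper's proof essentially step for step. The paper also writes $Q_\pm$ as the reweightings $r/Z_+$ and $(1-r)/Z_-$ of $Q_{\mathrm{old}}$, marginalizes onto $x_t$ to get marginals scaled by $q/Z_+$ and $(1-q)/Z_-$, applies Bayes' rule to obtain the conditional densities $r/q$ and $(1-r)/(1-q)$, and then subtracts $q\,v_{\mathrm{fwd}}^{\mathrm{old}}$ (respectively from $(1-q)\,v_{\mathrm{fwd}}^{\mathrm{old}}$) exactly as you do to get $g$.
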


Appendix~\ref{app:nft_covariance_proof} provides the proof. The ideal guidance of the two branches thus differs only in step size: under \emph{forward--reverse compatibility}, i.e., if the reverse sampler satisfied $v_{\mathrm{rvs}}^{\mathrm{old}}=v_{\mathrm{fwd}}^{\mathrm{old}}$, either branch alone would suffice. In practice, however, the reverse sampling field generally differs from the ideal forward velocity induced by its own endpoints. We denote this forward--reverse gap by $e:=v_{\mathrm{fwd}}^{\mathrm{old}}(s_t)-v_{\mathrm{rvs}}^{\mathrm{old}}(s_t)$. Proposition~\ref{prop:nft_contamination} shows how it enters the NFT update.

\begin{proposition}[Practical forward-process RL contains a forward--reverse consistency force]
\label{prop:nft_contamination}
Let $\mathcal L_{\mathrm{NFT}}(s_t)$ denote
Equation~\ref{eq:nft_objective} conditioned on $s_t$, $q$ and $g$ follow the definitions of Lemma \ref{lem:nft_covariance}. At the pure on-policy
update point $v_\theta=v_{\mathrm{rvs}}^{\mathrm{old}}$ and for $\beta>0$,
its normalized output-space force is
\begin{equation}
\mathcal F_{\mathrm{NFT}}(s_t)
:=-\frac{1}{2\beta}
\left.\nabla_{v_\theta(s_t)}\mathcal L_{\mathrm{NFT}}(s_t)
\right|_{v_\theta=v_{\mathrm{rvs}}^{\mathrm{old}}}
=2g+(2q-1)e.
\label{eq:nft_force_decomposition}
\end{equation}
\end{proposition}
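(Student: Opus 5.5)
We work directly from the equivalent quadratic form in Equation~\ref{eq:nft_objective}. Conditioning on $s_t$ and differentiating with respect to the output $v_\theta(s_t)$ gives
\[
\nabla_{v_\theta(s_t)}\mathcal L_{\mathrm{NFT}}(s_t)=2\beta^2\,\E\bigl[v_\theta-v_{\mathrm{tgt}}\mid s_t\bigr].
\]
Here $v_{\mathrm{tgt}}=a u_t+(1-a)v_{\mathrm{old}}$ and $a=(2r-1)/\beta$. One can equally differentiate the original two-branch form, since $\partial v_{\theta,\pm}/\partial v_\theta=\pm\beta$; both routes give the same result. This is the per-sample descent direction already noted in the preliminaries, $(2r-1)(u_t-v_{\mathrm{old}})$, now averaged over $s_t$.

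At $v_\theta=v_{\mathrm{old}}=v_{\mathrm{rvs}}^{\mathrm{old}}$ we have $v_\theta-v_{\mathrm{tgt}}=-a(u_t-v_{\mathrm{old}})$. Therefore
\[
\mathcal F_{\mathrm{NFT}}(s_t)=\beta\,\E[a(u_t-v_{\mathrm{old}})\mid s_t]=\E[(2r-1)(u_t-v_{\mathrm{old}})\mid s_t],
\]
where the conditional expectation is taken under $Q_{\mathrm{old}}$. Because $v_{\mathrm{old}}$ is a deterministic function of $s_t$, it factors out of the conditional expectation.

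Expanding gives
\[
\mathcal F_{\mathrm{NFT}}(s_t)=2\E[r u_t\mid s_t]-v_{\mathrm{fwd}}^{\mathrm{old}}-(2q-1)v_{\mathrm{rvs}}^{\mathrm{old}}.
\]
Writing $\E[r u_t\mid s_t]=g+q\,v_{\mathrm{fwd}}^{\mathrm{old}}$, by the definition of conditional covariance, we get
\[
\mathcal F_{\mathrm{NFT}}(s_t)=2g+(2q-1)v_{\mathrm{fwd}}^{\mathrm{old}}-(2q-1)v_{\mathrm{rvs}}^{\mathrm{old}}=2g+(2q-1)e,
\]
which is the claimed identity. Lemma~\ref{lem:nft_covariance} is only needed to interpret $2g$ as the sum of the positive and negative ideal guidance terms. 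That sum is $q(v_{\mathrm{fwd}}^+-v_{\mathrm{fwd}}^{\mathrm{old}})+(1-q)(v_{\mathrm{fwd}}^{\mathrm{old}}-v_{\mathrm{fwd}}^-)$.

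The main care point is measure-theoretic bookkeeping: conditioning on $s_t$ must use the joint law induced by $\mathcal D_{\mathrm{old}}$, so that $t$ and $c$ are part of the conditioning and $x_0\mid s_t$ follows the forward posterior under $P_{\mathrm{old}}$. We also need $r$ and $u_t$ to be square integrable so that $g$ exists. The gradient must be taken pointwise in $v_\theta(s_t)$, treating the network output as a free variable at each state.
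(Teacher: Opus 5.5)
Your proof is correct and follows essentially the same route as the paper. Both arguments reduce the normalized output-space force at the on-policy point to $\E_{Q_{\mathrm{old}}}[(2r-1)(u_t-v_{\mathrm{rvs}}^{\mathrm{old}})\mid s_t]$, then split it into $2\operatorname{Cov}(r,u_t\mid s_t)=2g$ plus $(2q-1)e$. The only cosmetic difference is that you differentiate the completed-square form of Equation~\ref{eq:nft_objective}, whereas the paper first expands the two-branch weighted losses into a quadratic in $d$ and then differentiates.
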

\noindent\textbf{Why both branches are needed.} Proposition~\ref{prop:nft_contamination} decomposes the NFT force into reward-improving guidance and a self-consistency force weighted by the statewise reward imbalance $2q-1$. The individual branch forces are $\mathcal F_+=g+qe$ and $\mathcal F_-=g-(1-q)e$, which sum to $\mathcal F_{\mathrm{NFT}}$ (proof in Appendix~\ref{app:nft_decomposition_proofs}). Each branch alone thus carries an uncancelled consistency force: the positive branch pulls the velocity toward $v_{\mathrm{fwd}}^{\mathrm{old}}$ with weight $q$, whereas the negative branch pushes it away with weight $1-q$. Joint training therefore does more than double the guidance: the branches reinforce each other along $g$ but oppose each other along $e$, leaving $(2q-1)e$. Without clipping, prompt-level reward centering makes $q$ average to $1/2$ across states, allowing partial cancellation after gradient aggregation (Figure~\ref{fig:nft_branch_balance}(b)).

\noindent\textbf{Why joint training is not enough.} For nonzero consistency gaps, cancellation is exact only if $q=1/2$ at every state, which prompt-level normalization does not guarantee: $\sum_i(2r_i-1)\approx0$ neither balances each state nor implies $\sum_i(2r_i-1)e_i\approx0$, where $e_i$ denotes the one-sample Monte Carlo estimate of $e$. States with $q>1/2$ are systematically pulled toward $v_{\mathrm{fwd}}^{\mathrm{old}}$, while those with $q<1/2$ are pushed away. At $t=1$, the state contains no endpoint information, so without clipping $q=1/2$ and the residual vanishes. As noise decreases, states reveal more endpoint quality, making $q$ more extreme and increasing the weight $\lvert2q-1\rvert$ on the gap. The residual dilutes reward-improving guidance in the aggregate gradient and requires an additional stabilizer (Section~\ref{sec:ema_restoration}).

\begin{insight}[Joint training cancels the consistency force only partially]
\label{ins:nft_self_consistency}
In NFT-style forward-process training, either branch alone carries an uncancelled forward--reverse consistency force that can destabilize training; joint training only partially cancels it across samples, not at each state. A better design should remove this force statewise rather than rely on cross-sample cancellation (Section~\ref{sec:art_guidance}).
\end{insight}
\FloatBarrier

\subsection{Off-Policy EMA: Stability through Restoration, at the Cost of Task-Specific Tuning}
\label{sec:ema_restoration}
\begin{figure*}[!htbp]
\centering
\includegraphics[width=\textwidth]{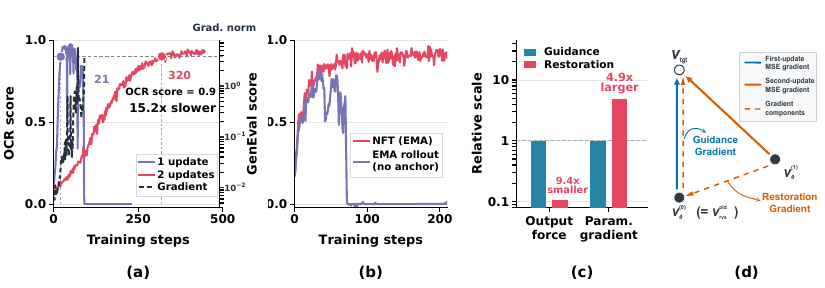}
\vspace{0pt}
\caption{
(a) On-policy DiffusionNFT without EMA ($\beta=1$): increasing number of micro-batches per rollout from 1 to 2 results in  $15.2\times$ slower training. The dashed curve is the one-update run's gradient norm, spiking as the reward collapses. (b) EMA-policy rollouts without the old-policy anchor still collapse. (c) At the second update, averaged over rollouts 0--17, restoration has a $9.4\times$ smaller output-force RMS but a $4.9\times$ larger parameter-gradient norm than guidance. (d) The second MSE update decomposes into guidance and restoration.}
\label{fig:restoration_dynamics}
\vspace{0pt}
\end{figure*}

Even with joint training, pure on-policy NFT collapses after a sharp rise in its pre-clipping parameter-gradient norm (Figure~\ref{fig:restoration_dynamics}(a)). EMA may stabilize training through (1) a slower-moving rollout distribution or (2) the lagged reference $v_{\mathrm{old}}$ retained in the loss (in Eq. \ref{eq:nft_objective}, the $v_{\mathrm{tgt}}$ contains $v_{\mathrm{old}}$, rather than $v_{\theta}$). We show that the latter, namely restoration toward this reference, is the key stabilizing mechanism. Let $d=v_\theta-v_{\mathrm{old}}$ denote the current policy's drift from the rollout old policy and $h=u_t-v_{\mathrm{old}}$ the sample training residual.

\begin{proposition}[EMA brings an explicit restoring term to DiffusionNFT]
\label{prop:nft_guidance_restore}
\begin{equation}
\mathcal L_{\mathrm{NFT}}
=\E_{\mathcal D_{\mathrm{old}}}\!\left[-2\beta(2r-1)
\langle d,h\rangle
+\beta^2\lVert d\rVert_2^2\right]+\mathrm{const},
\label{eq:nft_guidance_restore_decomposition}
\end{equation}
\end{proposition}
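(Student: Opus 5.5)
The plan is to expand the original two-branch form of $\mathcal L_{\mathrm{NFT}}$ directly in the variables $d=v_\theta-v_{\mathrm{old}}$ and $h=u_t-v_{\mathrm{old}}$. Equation~\ref{eq:nft_objective} would serve only as a consistency check. By the definition $v_{\theta,\pm}=v_{\mathrm{old}}\pm\beta(v_\theta-v_{\mathrm{old}})$, the two branch residuals are
\begin{equation*}
v_{\theta,+}-u_t=\beta d-h, \qquad v_{\theta,-}-u_t=-\beta d-h.
\end{equation*}

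Expanding the squared norms pointwise gives $\lVert v_{\theta,+}-u_t\rVert_2^2=\beta^2\lVert d\rVert_2^2-2\beta\langle d,h\rangle+\lVert h\rVert_2^2$ and $\lVert v_{\theta,-}-u_t\rVert_2^2=\beta^2\lVert d\rVert_2^2+2\beta\langle d,h\rangle+\lVert h\rVert_2^2$. Weighting the first by $r$ and the second by $1-r$, the quadratic terms combine with total weight $r+(1-r)=1$. The cross terms combine with weight $-r+(1-r)$, so they contribute $-2\beta(2r-1)\langle d,h\rangle$. The leftover $\lVert h\rVert_2^2$ carries weight $1$.

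Taking expectations under $\mathcal D_{\mathrm{old}}$ then yields the claimed identity, with $\mathrm{const}=\E_{\mathcal D_{\mathrm{old}}}[\lVert u_t-v_{\mathrm{old}}\rVert_2^2]$. This constant is independent of $\theta$ because $v_{\mathrm{old}}$ is frozen within an iteration and $\mathcal D_{\mathrm{old}}$ does not depend on $\theta$. In fact the identity holds exactly pointwise before taking the expectation, so linearity of expectation finishes the argument provided the second moments of $u_t$, $v_{\mathrm{old}}$ and $v_\theta$ are finite.

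As a cross-check against Equation~\ref{eq:nft_objective}, substitute $v_{\mathrm{tgt}}-v_{\mathrm{old}}=a h$ with $a=(2r-1)/\beta$. This gives $\beta^2\lVert d-ah\rVert_2^2=\beta^2\lVert d\rVert_2^2-2\beta(2r-1)\langle d,h\rangle+(2r-1)^2\lVert h\rVert_2^2$, which agrees up to a $\theta$-independent term. The two constants differ, which is consistent with both being absorbed into ``const''.

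There is no real obstacle here; the argument is a completed-square expansion. The only care needed is bookkeeping of signs in the cross terms, and stating explicitly that ``const'' absorbs every term not involving $d$. I would also remark on interpretation: the first term is the guidance force along $h$, which is linear in $d$, and the second term $\beta^2\lVert d\rVert_2^2$ is the restoring penalty. This penalty is active only once $d\neq0$, that is, after the first update or under an EMA lag, and that motivates the proposition's title.
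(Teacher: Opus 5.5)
Your proof is correct and follows the paper's argument: write the branch residuals as $\beta d-h$ and $-\beta d-h$, expand the $r$- and $(1-r)$-weighted squared norms pointwise to $\beta^2\lVert d\rVert_2^2-2\beta(2r-1)\langle d,h\rangle+\lVert h\rVert_2^2$, and absorb $\E_{\mathcal D_{\mathrm{old}}}[\lVert h\rVert_2^2]$ into the constant. Your cross-check against Equation~\ref{eq:nft_objective} also agrees with the paper's constant there: the two constants differ by $\E_{\mathcal D_{\mathrm{old}}}[4r(1-r)\lVert h\rVert_2^2]$, which is $\theta$-independent.
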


where $\mathrm{const}$ is independent of the trainable parameters.

\noindent\textbf{Why EMA stabilizes.} Proposition~\ref{prop:nft_guidance_restore} separates the NFT loss into a linear term carrying both reward guidance and the consistency force, and a quadratic term restoring $v_\theta$ toward $v_{\mathrm{old}}$ (proof in Appendix~\ref{app:nft_decomposition_proofs}). At $d=0$, the restoration gradient vanishes. Pure on-policy NFT therefore receives no restoration, leaving the residual consistency force unopposed. EMA maintains a nonzero drift from the lagged reference, allowing restoration to counteract this residual.

We test these explanations with two ablations. First, retaining EMA-policy rollouts but removing the old-policy anchor from the gradient (i.e. replacing $v_{\mathrm{old}}$ by $v_{\theta}$ in $v_{\mathrm{tgt}}$ of Eq \ref{eq:nft_objective}) still causes collapse (Figure~\ref{fig:restoration_dynamics}(b)): off-policy data alone does not stabilize NFT. Second, without EMA, we split each rollout batch into two disjoint minibatches and update once on each, using the original objective with $\beta=1$. The first update starts at $v_\theta^{(0)}=v_{\mathrm{old}}$, where restoration is zero. After the model moves to $v_\theta^{(1)}$, the second update combines guidance with restoration toward $v_\theta^{(0)}$ (Figure~\ref{fig:restoration_dynamics}(d)). We surprisingly found that this two-update run avoids collapse over the observed training window without EMA, but significantly slows training by 15.2x (Figure~\ref{fig:restoration_dynamics}(a)), supporting restoration as the key stabilizing mechanism at the cost of efficiency in NFT-style forward-process training.

\noindent\textbf{Why EMA is not enough.} EMA controls restoration indirectly through the reference policy's lag: too little restoration risks collapse, while too much slows reward improvement. Its schedule therefore requires task-specific tuning. DiffusionNFT's results illustrate this sensitivity~\citep{zheng2025diffusionnft}: fully on-policy updates collapse, nearly offline updates converge impractically slowly, and the default schedule $\eta=\min(0.001i,0.5)$, where $i$ is the iteration, is unstable on OCR, which requires a maximum retention of $0.999$. In our FLUX experiments, transferring NFT's GenEval-tuned EMA and $\beta$ settings also leads to unstable training and can cause collapse (Figure~\ref{fig:t2i_pair_assignment_main}(a)). Such tuning requirement limits scalability.

Two NFT updates per rollout provide restoration without EMA (Figure~\ref{fig:restoration_dynamics}(d)), but significantly slow training (Figure~\ref{fig:restoration_dynamics}(a)). Counterintuitively, averaged over rollouts 0--17, restoration has a $9.4\times$ smaller output-force RMS than guidance, yet its aggregated parameter-gradient norm is $4.9\times$ larger (Figure~\ref{fig:restoration_dynamics}(c)). This suggests that heterogeneous per-sample guidance largely cancels across states, while restoration survives aggregation and dominates the update, slowing reward optimization. We leave a detailed analysis to our future work.

\begin{insight}[Restoration is crucial for stabilizing NFT-style forward-process training]
\label{ins:statewise_restore}
Restoration stabilizes NFT-style training, but EMA requires task-specific tuning, while two updates per rollout avoid EMA tuning at the cost of slower reward improvement. A better design should adapt restoration to each state's drift without task-specific tuning or overwhelming reward guidance (Section~\ref{sec:art_restoration}).
\end{insight}

\begin{figure}[t]
\centering
\includegraphics[width=\linewidth]{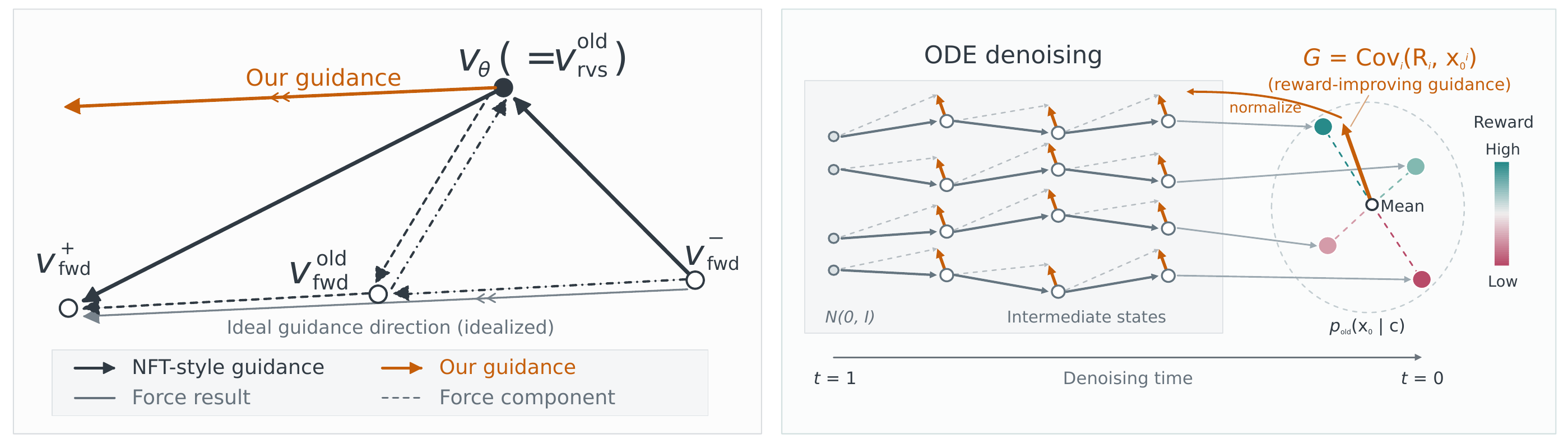}
\vspace{-0.5cm}
\caption{\textbf{Aggregation-first guidance.} \textbf{Left:} NFT's positive and negative branches introduce opposing consistency forces that may not fully cancel (Section~\ref{sec:nft_failure_modes}). DiffusionART avoids these forces by applying $G(c)$ directly at the reverse velocity. \textbf{Right:} We aggregate each prompt's endpoints and rewards into normalized guidance $G(c)$ (Equation~\ref{eq:art_empirical_cov}) and apply it to states visited along this rollout.}
\label{fig:aggregation_first}
\vspace{-0.2cm}
\end{figure}
\section{DiffusionART: Aggregation-first Reward Tilting with Adaptive Restoration}
\label{sec:method}
DiffusionART (Algorithm~\ref{alg:flowart}) combines the advantages of forward- and reverse-process RL, while avoiding their limitations. It applies aggregation-first reward guidance to reverse-rollout states (section \ref{sec:art_guidance}) and uses a second MSE update for drift-adaptive restoration (section \ref{sec:art_restoration}), addressing the two failure modes identified in Section 3. A trust-region mechanism for an ODE solver further avoids excessive policy updates (section \ref{sec:art_mask}).

\subsection{Aggregation-First Guidance}
\label{sec:art_guidance}
The consistency force enters NFT because its regression target is built from the forward velocity $v_{\mathrm{fwd}}^{\mathrm{old}}$, which differs from the reverse field by $e$ (Proposition~\ref{prop:nft_contamination}). We instead aggregate each prompt's endpoints and rewards alone into guidance for reverse-rollout states. At each iteration, we exponentially tilt the current rollout distribution, rather than a fixed pretrained distribution~\citep{potaptchik2025tiltmatching,zhu2026crd}: $p_\tau(x_0\mid c)\propto p_{\mathrm{old}}(x_0\mid c)\exp\bigl(\tau R(x_0,c)\bigr)$. Here $\tau\geq0$ controls the preference for higher-reward endpoints, and $\tau=0$ recovers $p_{\mathrm{old}}$. The first-order change in the endpoint mean is a covariance:

\begin{lemma}[Reward tilting moves the endpoint mean along the reward--endpoint covariance]
\label{lem:art_tilt}
Under standard integrability conditions, the derivative of the tilted endpoint mean at $\tau = 0$ equals the reward–endpoint covariance:
$\left.\frac{\partial}{\partial\tau}\E_{p_\tau}[x_0\mid c]\right|_{\tau=0}
=\operatorname{Cov}_{p_{\mathrm{old}}}(R,x_0\mid c)$.
\end{lemma}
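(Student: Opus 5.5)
The plan is to write the tilted mean as a ratio of two $\tau$-dependent integrals, differentiate under the integral sign, and evaluate at $\tau=0$. Fix the prompt $c$ and write $R=R(x_0,c)$. Define the partition function $Z(\tau):=\E_{p_{\mathrm{old}}}[\exp(\tau R)\mid c]$ and the unnormalized first moment $M(\tau):=\E_{p_{\mathrm{old}}}[x_0\exp(\tau R)\mid c]$. Then
\[
\E_{p_\tau}[x_0\mid c]=M(\tau)/Z(\tau),
\]
and at $\tau=0$ we have $Z(0)=1$ and $M(0)=\E_{p_{\mathrm{old}}}[x_0\mid c]$.

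The first step is to make the ``standard integrability conditions'' concrete. I would assume there is a neighborhood $|\tau|\le\tau_0$ on which $\E_{p_{\mathrm{old}}}[\lvert R\rvert e^{\tau_0\lvert R\rvert}]$ and $\E_{p_{\mathrm{old}}}[\lVert x_0\rVert_2\,\lvert R\rvert e^{\tau_0\lvert R\rvert}]$ are finite. Bounded rewards, as with the clipped and normalized scores used in practice, together with a finite first moment of $x_0$, suffice.

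Under these bounds, dominated convergence (via the mean value theorem on $e^{\tau R}$) justifies differentiating under the expectation. This gives
\[
Z'(\tau)=\E_{p_{\mathrm{old}}}[R\,e^{\tau R}\mid c],\qquad M'(\tau)=\E_{p_{\mathrm{old}}}[x_0R\,e^{\tau R}\mid c].
\]
This interchange is the only genuinely non-algebraic step, and I expect it to be the main obstacle insofar as there is one. Everything after it is routine.

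Next, apply the quotient rule:
\[
\frac{\partial}{\partial\tau}\frac{M}{Z}=\frac{M'}{Z}-\frac{M\,Z'}{Z^2}.
\]
Equivalently, this equals $\E_{p_\tau}[x_0R]-\E_{p_\tau}[x_0]\,\E_{p_\tau}[R]=\operatorname{Cov}_{p_\tau}(R,x_0\mid c)$ for every $\tau$ in the neighborhood, which is a slightly stronger statement. Setting $\tau=0$, where $Z=1$ and $p_0=p_{\mathrm{old}}$, yields
\[
\E_{p_{\mathrm{old}}}[x_0R\mid c]-\E_{p_{\mathrm{old}}}[x_0\mid c]\,\E_{p_{\mathrm{old}}}[R\mid c]=\operatorname{Cov}_{p_{\mathrm{old}}}(R,x_0\mid c),
\]
as claimed.

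As a remark, this mirrors the structure of Lemma~\ref{lem:nft_covariance}. There the linear tilt $r\,p_{\mathrm{old}}$ gives an exact covariance identity. Here the exponential tilt gives the same covariance at first order, since $\exp(\tau R)\approx 1+\tau R$.
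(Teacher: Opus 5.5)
Your proposal is correct and is essentially the paper's proof: the same ratio $M(\tau)/Z(\tau)$ with $M(\tau)=\E_{p_{\mathrm{old}}}[x_0e^{\tau R}]$ and $Z(\tau)=\E_{p_{\mathrm{old}}}[e^{\tau R}]$, differentiation under the expectation, and the quotient rule with $Z(0)=1$. One small tightening: your general integrability hypothesis should also include $\E_{p_{\mathrm{old}}}\lVert x_0\rVert_2<\infty$, which the paper's sufficient condition $\E[(1+\lVert x_0\rVert)(1+|R|)e^{a|R|}]<\infty$ already contains, because $\E[\lVert x_0\rVert_2\,|R|e^{\tau_0|R|}]<\infty$ alone gives no control of $x_0$ on $\{R=0\}$ and so does not guarantee that $M(\tau)$ is finite.
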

Lemma~\ref{lem:art_tilt}, proved in Appendix~\ref{app:art_details}, shows that up-weighting higher-reward endpoints shifts their mean along $\operatorname{Cov}_{p_{\mathrm{old}}}(R,x_0\mid c)$. Under rectified flow, $u_t=x_0-\epsilon$ with $\epsilon$ independent of $x_0$, so $\operatorname{Cov}_{Q_{\mathrm{old}}}(R,u_t\mid c)=\operatorname{Cov}_{p_{\mathrm{old}}}(R,x_0\mid c)$ at every noise level. This direction is the prompt-level counterpart of ideal NFT guidance $g$ (Lemma~\ref{lem:nft_covariance}): without clipping, $r$ is affine in $R$, and at $t=1$, when the forward state reveals no endpoint information, $g$ equals this covariance up to a positive factor. We estimate the covariance from each prompt's $K$ rollout endpoints and rewards, then normalize it to obtain $G(c)$:\par

\begingroup
\small

\begin{equation}
\boxed{\overline{\operatorname{Cov}}(R,x_0\mid c)
=\frac1K\sum_{i=1}^{K}(R_i-\bar R_c)(x_0^{(i)}-\bar x_{0,c}),
\qquad G(c)=\frac{\overline{\operatorname{Cov}}}{\rms(\overline{\operatorname{Cov}})}}.
\label{eq:art_empirical_cov}
\end{equation}
\endgroup
Here $\bar R_c$ and $\bar x_{0,c}$ are the group means.
RMS normalization fixes the guidance magnitude across prompts and reward models, removing differences in reward scale and spread without a guidance-strength coefficient. We apply $G(c)$ to every rollout state $s_t$ of prompt $c$ and freeze the target $v_{\mathrm{tgt}}^{\mathrm{ART}}(s_t)=v_\theta^{(0)}(s_t)+G(c)$, where $v_\theta^{(0)}=v_{\mathrm{old}}$ is the prediction cached during rollout. At $v_\theta=v_\theta^{(0)}$, every state receives an output-space force proportional to $G(c)$, with no NFT consistency term (Equations~\ref{eq:art_update_components} and~\ref{eq:nft_force_decomposition}; Insight~\ref{ins:nft_self_consistency}). Figure~\ref{fig:aggregation_first} summarizes the construction.

\vspace{0pt}
\subsection{Two-Update Restoration: Drift-Adaptive Stability without EMA Tuning}

\label{sec:art_restoration}
Instead of tuning an EMA restoration schedule (Section~\ref{sec:ema_restoration}), we split each rollout batch into two disjoint minibatches and apply one MSE update to each. Let $v_\theta^{(0)}$ and $v_\theta^{(1)}$ denote the velocity fields before and after the first update. Both updates use the frozen target $v_{\mathrm{tgt}}^{\mathrm{ART}}=v_\theta^{(0)}+G(c)$. For each state $s_t$, the loss is $\mathcal L_\theta(s_t)=w(s_t)\operatorname{MSE}(v_\theta(s_t),v_{\mathrm{tgt}}^{\mathrm{ART}}(s_t))$, where $\operatorname{MSE}(a,b)=D^{-1}\lVert a-b\rVert_2^2$, $D$ is the number of output coordinates, and $w$ combines timestep weighting and masking. Let $d(s_t)=v_\theta(s_t)-v_\theta^{(0)}(s_t)$ denote the current displacement. The loss and output gradient before each update are:\par

\begingroup
\small
\begin{equation}
\setlength{\dashlinedash}{2pt}
\setlength{\dashlinegap}{2pt}
\setlength{\arraycolsep}{8pt}\begin{array}{@{}w{c}{0.43\linewidth}:w{c}{0.43\linewidth}@{}}
\mathcal L^{(0)}=w\operatorname{MSE}(v_\theta^{(0)},v_{\mathrm{tgt}}^{\mathrm{ART}})
=\dfrac{w}{D}\lVert G(c)\rVert_2^2
& \mathcal L^{(1)}=w\operatorname{MSE}(v_\theta^{(1)},v_{\mathrm{tgt}}^{\mathrm{ART}}) 
=\dfrac{w}{D}\lVert G(c)-d(s_t)\rVert_2^2 \\[6pt]
-\left.\nabla_{v_\theta}\mathcal L_\theta\right|_{v_\theta=v_\theta^{(0)}}
=\dfrac{2w}{D}\,G(c)
& -\left.\nabla_{v_\theta}\mathcal L_\theta\right|_{v_\theta=v_\theta^{(1)}}
=\dfrac{2w}{D}\bigl(G(c)-d(s_t)\bigr).
\end{array}
\label{eq:art_update_components}
\end{equation}
\endgroup
\par\noindent The second update combines the original guidance $G(c)$ with restoration $-d(s_t)$ toward $v_\theta^{(0)}$. Restoration grows with each state's drift, counteracting large displacements while leaving guidance dominant when drift is small, without additional tuning. In NFT, cross-state cancellation weakens guidance, allowing restoration to dominate and slow training (Section~\ref{sec:ema_restoration}). DiffusionART instead shares $G(c)$ across each prompt's states, reducing this cancellation and improving stability without sacrificing efficiency (Insight~\ref{ins:statewise_restore}; Figure~\ref{fig:t2i_ablation_curves}).

\subsection{Trust-Region Masking for ODE Rollouts}
\label{sec:art_mask}
Despite restoration, applying guidance gradient in both updates can produce an excessive policy change within a single rollout batch, potentially destabilizing training. Inspired by PPO clipping~\citep{schulman2017proximal}, we constrain it with a trust-region mask as a stabilizing trick. However, ODE transitions are deterministic, providing no usable finite KL to measure policy change. We address this by following Flow-GRPO's ODE-to-SDE formulation~\citep{liu2025flowgrpo} to define reference Gaussian transitions for the current and old policies: $q_\theta(\cdot\mid s_t)=\mathcal N(\mu(s_t,v_\theta(s_t)),\sigma_{\rm ref}^2(t)I)$ and $q_{\rm old}(\cdot\mid s_t)=\mathcal N(\mu(s_t,v_\theta^{(0)}(s_t)),\sigma_{\rm ref}^2(t)I)$. Based on this, we propose a KL proxy to measure the drift:
\begin{equation}
 m(s_t)=\frac{1}{D}D_{\rm KL}\!\left(q_\theta(\cdot\mid s_t)\,\|\,q_{\rm old}(\cdot\mid s_t)\right)
 =\frac{\|\mu(s_t,v_\theta(s_t))-\mu(s_t,v_\theta^{(0)}(s_t))\|_2^2}{2D\sigma_{\rm ref}^2(t)}.
 \label{eq:art_kl_proxy}
\end{equation}
The proxy uses velocity predictions and rollouts remain ODE trajectories. Here $\mu(s_t,v)$ is the mean of the next latent after one reference SDE step with velocity $v$, and $\sigma_{\rm ref}(t)$ is the policy-independent noise scale of that step (formulas in Appendix~\ref{app:art_mask}).

Inspired by Flow-DPPO~\citep{ping2026flowdppo}, we mask excessive outward drift while retaining inward corrections. We identify outward updates by a positive inner product between the displacement $d(s_t)$ and the second update's output-space descent direction $G(c)-d(s_t)$:
\begin{equation}
 w(s_t)=t\left[1-\mathbf1\{m(s_t)\geq\delta\}\,
 \mathbf1\{\langle d(s_t),G(c)-d(s_t)\rangle>0\}\right].
 \label{eq:art_trust_mask}
\end{equation}
The two conditions detect excessive drift and outward updates, respectively. We fix the drift threshold $\delta$ to $10^{-5}$ throughout and use the factor $t$ to emphasize high-noise states.

\begingroup

\begin{algorithm}[!htbp]
\caption{DiffusionART: one training iteration}\label{alg:flowart}
\small
\begin{algorithmic}[1]
\State Sample with an ODE solver using $v_\theta^{(0)}=v_\theta$. \hfill\textcolor{black!50}{\textbf{Rollout Stage}}
\Statex \hspace{\algorithmicindent}For each prompt $c$, collect $K$ endpoints with rewards $R$, states $s_t$, and predictions $v_\theta^{(0)}(s_t)$.
\State Compute $G(c)$ by Equation~\ref{eq:art_empirical_cov} for each prompt $c$.
\State Freeze $v_{\mathrm{tgt}}^{\mathrm{ART}}(s_t)=v_\theta^{(0)}(s_t)+G(c)$ for each state $s_t$ of prompt $c$.

\Statex \leavevmode\leaders\hbox{\rule[0.5ex]{2pt}{0.3pt}\hskip2pt}\hfill\mbox{}
\State Split the samples into two equal-sized disjoint minibatches $B_1,B_2$. \hfill\textcolor{black!50}{\textbf{Training Stage}}
\State For each $B\in\{B_1,B_2\}$, recompute $w(s_t)$ and update $\theta$ with one step on

\Statex \hspace{\algorithmicindent}$\mathcal L(B;\theta)=\dfrac{1}{|B|}\sum_{s_t\in B}w(s_t)\operatorname{MSE}(v_\theta(s_t),v_{\mathrm{tgt}}^{\mathrm{ART}}(s_t))$.
\end{algorithmic}
\end{algorithm}
\endgroup

\subsection{Extension to Pairwise Reward Models}
\label{sec:art_pairwise}
Unlike math or code, visual outputs often lack objectively verifiable correctness criteria, making pairwise preferences a natural alternative to scalar scores for reward modeling \citep{kirstain2023pickapic}. \textbf{DiffusionART-Pairwise} is a variant of DiffusionART which naturally supports online pairwise preference feedback~\citep{kirstain2023pickapic}. For each prompt, we divide $K$
  endpoints into $K/2$ disjoint pairs and assign winner/loser labels $(1,0)$, or $(1/2,1/2)$ for ties. These labels replace scalar
  rewards in the otherwise unchanged covariance and training pipeline. Appendix~\ref{app:art_pairwise} provides the covariance
  formulation and pseudocode.

\begingroup

\section{Experiments}
\label{sec:experiments}

We study the efficiency, stability, and scalability of DiffusionART, progressing from text-to-image (T2I) generation to larger models and higher-dimensional text-to-video (T2V) and image-to-video (I2V) generation.

\subsection{Text-to-Image Generation}
\label{sec:t2i}

\noindent\textbf{Setup.}\label{sec:t2i_main}
We compare DiffusionART with Flow-GRPO, Flow-GRPO-Fast (CPS)~\citep{liu2025flowgrpo,wang2025coefficientspreservingsamplingreinforcementlearning}, DanceGRPO~\citep{xue2025dancegrpo}, Flow-DPPO~\citep{ping2026flowdppo}, and DiffusionNFT~\citep{zheng2025diffusionnft} with and without EMA, denoted DiffusionNFT (EMA) and DiffusionNFT (on-policy). All methods train LoRA adapters~\citep{hu2021lora} with rank $r=32$ and scaling factor $\alpha=64$ at $512\times512$ resolution, without CFG or reference-policy KL regularization. On SD3.5-M, we use a learning rate of $3\times10^{-4}$ and 10-step rollouts. DiffusionNFT uses the original per-task hyperparameters~\citep{zheng2025diffusionnft} to maximize its baseline performance.

\noindent\textbf{Single- and multi-reward optimization.}
We evaluate SD3.5-M~\citep{stabilityai2024sd35} on GenEval~\citep{ghosh2023geneval}, OCR, and PickScore~\citep{kirstain2023pickapic}, and further jointly optimize GenEval, PickScore, CLIPScore~\citep{hessel2021clipscore}, and HPSv2.1~\citep{wu2023hpsv2}. Figure~\ref{fig:t2i_main_curves}(a--d) shows that DiffusionART behaves similarly under single- and multi-reward training. Figure~\ref{fig:t2i_main_curves}(d) plots the total reward, with components in Appendix~\ref{app:t2i_multi}. Without per-task EMA tuning, DiffusionART avoids the severe collapse of DiffusionNFT (on-policy) and achieves training efficiency comparable to DiffusionNFT (EMA). It is also much faster than Flow-GRPO, reaching Flow-GRPO's step-500 reward with $10.0\times$, $5.2\times$, and $9.6\times$ fewer training steps on GenEval, OCR, and PickScore, respectively.

\begin{figure}[!htbp]
\centering
\setlength{\abovecaptionskip}{2pt}
\setlength{\belowcaptionskip}{0pt}
\includegraphics[width=\linewidth]{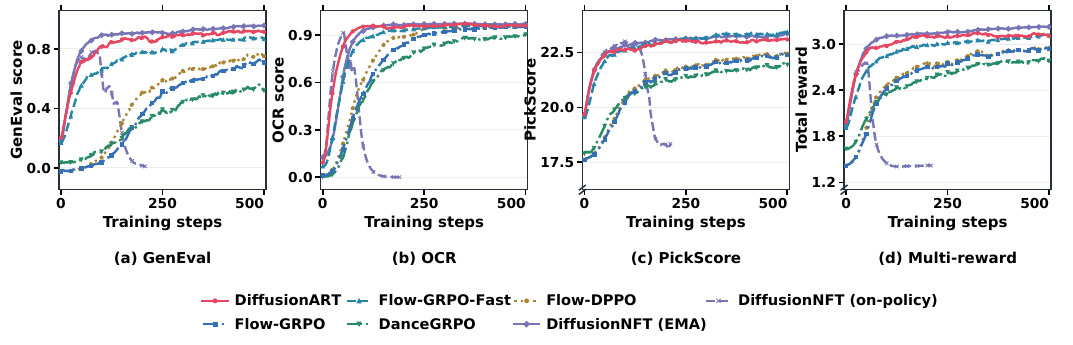}
\caption{Training rewards: (a--c) single- and (d) multi-reward SD3.5-M.}
\label{fig:t2i_main_curves}
\end{figure}

\noindent\textbf{Scaling to larger models.}
We extend the same recipe to FLUX.1-dev (12B)~\citep{blackforest2024flux} on GenEval, using six-step rollouts and its built-in guidance embedding at scale 3.5, without additional CFG. For DiffusionNFT, we retain $\beta=1$ and rollout-policy EMA decay $\min(0.001n,0.5)$ from the SD3.5-M GenEval baseline, where $n$ counts successful training updates. As shown in Figure~\ref{fig:t2i_pair_assignment_main}(a), DiffusionART learns fastest among all methods, while DiffusionNFT exhibits pronounced training-reward fluctuations because these hyperparameters are not retuned for FLUX. Table~\ref{tab:t2i_flux} in Appendix~\ref{app:t2i_flux} reports the GenEval evaluation scores of different methods at step 500.

\begin{figure}[H]
  \centering
  \includegraphics[width=\linewidth]{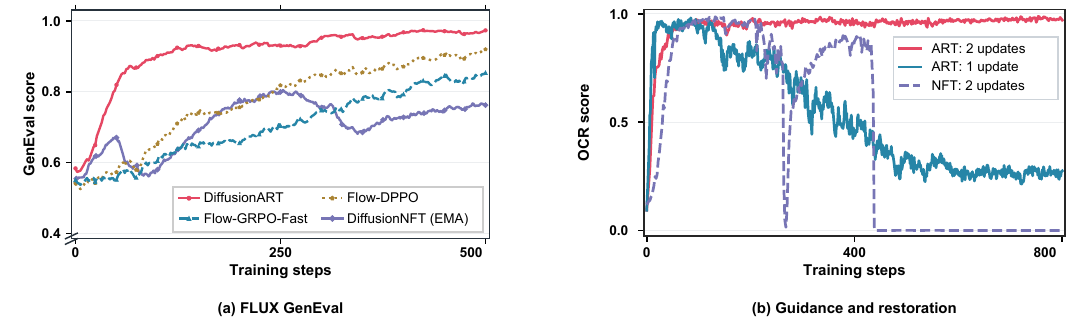}
  \vspace{-2pt}
  \caption{(a) FLUX.1-dev on GenEval. (b) Guidance and restoration ablation on OCR.}
  \label{fig:t2i_pair_assignment_main}\label{fig:t2i_ablation_curves}\vspace{-4pt}
\end{figure}

\noindent\textbf{Ablation of guidance and restoration.}\label{sec:t2i_restoration}
Figure~\ref{fig:t2i_pair_assignment_main}(b) compares \textit{ART: 1 update} and \textit{ART: 2 updates}, two DiffusionART variants with the same covariance guidance, against \textit{NFT: 2 updates} with a carefully tuned $\beta=0.1$. As explained in Appendix~\ref{app:art_guidance_approximation}, \textit{ART: 1 update} learns quickly but gradually deteriorates. The second update in \textit{ART: 2 updates} introduces restoration and provides long-term stability. In contrast, \textit{NFT: 2 updates} briefly collapses to near zero before recovering. These results highlight the importance of both guidance type and restoration.

\subsection{Text-to-Video Generation}
\label{sec:t2v}

\noindent\textbf{Setup.}
We fine-tune all parameters of the CFG-distilled HunyuanVideo-13B model~\citep{kong2024hunyuanvideo} with a learning rate of $10^{-5}$ and no KL regularization. Each group-based rollout uses 32 prompts and 8 videos per prompt. We generate 53-frame videos at $480\times480$ and 8 FPS with 16 sampling steps. VideoReward~\citep{liu2025videoreward} scores videos at 4 FPS for training and evaluation. All methods use matched TA:VQ:MQ weights of $7{:}2{:}1$ to improve output quality while preserving prompt following. Figure~\ref{fig:t2v_overall_curves}(a--c) compares all three rewards over 400 training steps; Appendix~\ref{app:t2v_matched_rewards} reports an additional VQ:MQ $=1{:}1$ comparison.

\noindent\textbf{Performance.}
As shown in Figure~\ref{fig:t2v_overall_curves}, DiffusionART surpasses all baselines in both VQ and MQ while keeping TA comparable with the strongest baseline. DiffusionART reaches training VQ of 6 and MQ of 3 after 29 and 46 steps, respectively; Flow-GRPO-Fast, DanceGRPO, and DiffusionNFT (EMA) do not reach these levels within their recorded trajectories. For VQ of 5.5, DiffusionART needs 24 steps and 63.4 GPU hours, versus 273 steps and 1,401.4 GPU hours for DiffusionNFT (EMA): \textbf{11.4$\times$ fewer training steps and 22.1$\times$ fewer GPU hours}. It retains its gains through 400 steps and achieves evaluation VQ and MQ of 8.970 and 4.224, as reported in Table~\ref{tab:t2v_overall}. DiffusionART improves evaluation VQ and MQ by 31.4\% and 57.4\%, respectively, over the strongest baseline.

\begin{figure}[H]
\centering
\includegraphics[width=\linewidth]{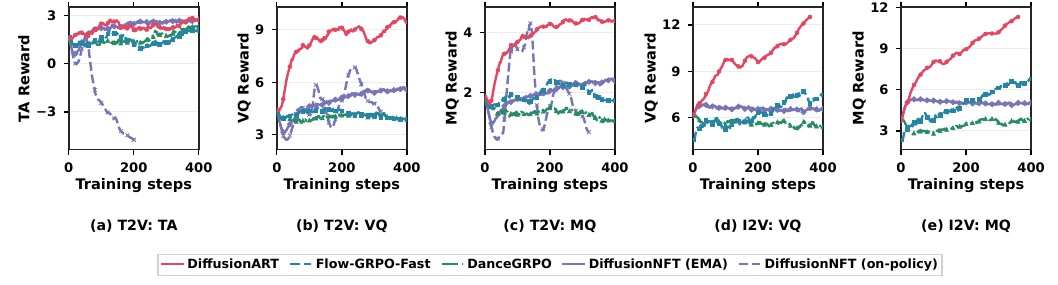}
\setlength{\abovecaptionskip}{2pt}
\setlength{\belowcaptionskip}{0pt}
\caption{\textbf{Efficient reward improvement across video generation.} (a--c) T2V: DiffusionART improves VQ and MQ fastest while maintaining TA comparable to the strongest baseline. (d--e) I2V: DiffusionART rapidly improves both rewards, whereas the baselines improve slowly or plateau.}
\label{fig:t2v_overall_curves}
\label{fig:i2v_overall_curves}
\end{figure}

\noindent\textbf{Ablation of trust-region masking.}\label{app:mask_ablation}
We remove only the trust-region mask described in Section~\ref{sec:art_mask}, retaining normalized covariance guidance, timestep weighting, and two disjoint MSE updates. With VQ:MQ $=1{:}1$ weights, Figure~\ref{fig:mask_ablation} shows that the no-mask run drops around step 200 before recovering, whereas the masked run improves steadily. Masking helps limit excessive updates, serving as an additional stability safeguard; Appendix~\ref{app:mask_diagnostics} provides further diagnostics of gradients and velocity drift, and Appendix~\ref{app:mask_image} reports the trust-region ablation on T2I tasks.

\begin{figure}[!t]
\centering
\begin{minipage}[c]{0.60\linewidth}
\centering
\includegraphics[width=\linewidth]{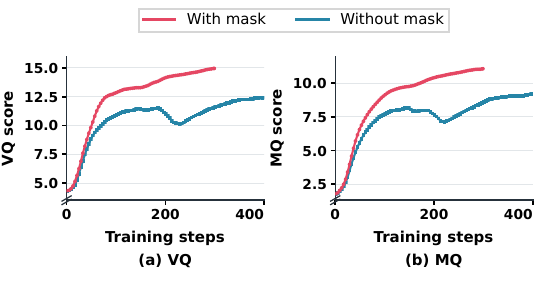}
\setlength{\abovecaptionskip}{2pt}
\caption{T2V with and without trust-region masking.}
\label{fig:mask_ablation}
\end{minipage}\hfill
\begin{minipage}[c]{0.37\linewidth}
\centering\footnotesize
\setlength{\abovecaptionskip}{2pt}
\setlength{\belowcaptionskip}{4pt}
\captionof{table}{VQ-only evaluation and GPU hours at step 400.}
\label{tab:t2v_pairwise}
\setlength{\tabcolsep}{4pt}
\begin{tabular}{lrr}
\toprule
Method & GPU Hours $\downarrow$ & VQ $\uparrow$ \\
\midrule
Scalar, $K=8$ & 891.1 & \textbf{13.637} \\
\midrule
\rowcolor{blue!8}
Pairwise, $K=2$ & \textbf{260.8} & 12.235 \\
Pairwise, $K=4$ & 465.6 & 10.883 \\
Pairwise, $K=8$ & 883.3 & 12.092 \\
\bottomrule
\end{tabular}
\end{minipage}
\end{figure}

\noindent\textbf{Pairwise feedback and ablation of group size.}
The performance of DiffusionART with pairwise train reward is close to that obtained with scalar feedback, using scalar reward as the ground truth evaluation metric. Meanwhile, DiffusionART is insensitive to the group size $K$, and a smaller $K$ substantially reduces GPU hours. Table~\ref{tab:t2v_pairwise} shows that, under the same 400-step budget with pairwise feedback, $K=2$ reaches a final VQ of 12.235, no lower than 12.092 for $K=8$. Because $K=2$ uses only a quarter of the rollout videos and training samples, it consumes only 260.8 GPU hours over the 400 steps, 3.4$\times$ fewer than the 883.3 GPU hours of $K=8$. One pair per prompt thus suffices for stable and compute-efficient optimization in this setting.

\par\addvspace{4pt}

\FloatBarrier
\subsection{Image-to-Video Generation}
\label{sec:i2v}

\noindent\textbf{Setup.}
We fine-tune all parameters of the CFG- and step-distilled Wan2.1-I2V-14B model~\citep{wan2025video} released by LightX2V~\citep{lightx2v2025}. Each rollout uses 32 prompts, FLUX.1-dev reference images, and 8 videos per prompt. We generate 53-frame videos at $400\times640$ and 8 FPS with four sampling steps. All methods use VQ:MQ $=1{:}1$ weights, a learning rate of $10^{-5}$, and no KL regularization. Reward-model settings follow Section~\ref{sec:t2v}.

\noindent\textbf{Performance.}
DiffusionART improves both VQ and MQ efficiently, whereas the baselines improve slowly or plateau (Figure~\ref{fig:i2v_overall_curves}(d--e)). It reaches MQ of 6 in 45.8 GPU hours versus 220.3 for Flow-GRPO-Fast, a \textbf{4.8$\times$ reduction}. DiffusionNFT plateaus early and remains near that plateau through 400 updates, which we attribute to the residual self-consistency force diluting reward-improving guidance after gradient aggregation. Evaluation results are in Table~\ref{tab:i2v_eval}.

\section{Conclusion}
We studied why NFT-style forward-process diffusion post-training can be unstable and found that its objective can introduce an implicit forward--reverse
self-consistency force that is not reliably canceled across states, while stronger restoration can improve stability at the cost of slower reward optimization. DiffusionART presents a simple but effective way to systematically cancel the unstable force by aggregating endpoint reward guidance before assigning it to reverse-rollout states, and then using a second update with a trust-region mechanism to provide drift-adaptive restoration for stabilization. Experiments on image and video generation show efficient, stable reward improvement, including with pairwise feedback and small rollout groups. Beyond the algorithm, our work provides insights into forward-process training, offering a clearer understanding of diffusion reinforcement learning and a path toward scalable diffusion post-training for visual generation.

\section{Acknowledgement}
We thank Kaiwen Zheng and Huayu Chen for the insightful discussion about the intuition behind DiffusionNFT and the valuable suggestion for the experiment design.

\FloatBarrier
\endgroup
\bibliography{references}

@article{black2023ddpo,
  title={Training Diffusion Models with Reinforcement Learning},
  author={Black, Kevin and Janner, Michael and Du, Yilun and Kostrikov, Ilya and Levine, Sergey},
  journal={arXiv preprint arXiv:2305.13301},
  year={2023}
}

@article{lipman2022flowmatching,
  title={Flow Matching for Generative Modeling},
  author={Lipman, Yaron and Chen, Ricky T. Q. and Ben-Hamu, Heli and Nickel, Maximilian and Le, Matt},
  journal={arXiv preprint arXiv:2210.02747},
  year={2022}
}

@article{liu2025flowgrpo,
  title={{Flow-GRPO}: Training Flow Matching Models via Online {RL}},
  author={Liu, Jie and Liu, Gongye and Liang, Jiajun and Li, Yangguang and Liu, Jiaheng and Wang, Xintao and Wan, Pengfei and Zhang, Di and Ouyang, Wanli},
  journal={arXiv preprint arXiv:2505.05470},
  year={2025}
}

@article{potaptchik2025tiltmatching,
  title={Tilt Matching for Scalable Sampling and Fine-Tuning},
  author={Potaptchik, Peter and Lee, Cheuk-Kit and Albergo, Michael S.},
  journal={arXiv preprint arXiv:2512.21829},
  year={2025}
}

@article{wallace2023diffusiondpo,
  title={Diffusion Model Alignment Using Direct Preference Optimization},
  author={Wallace, Bram and Dang, Meihua and Rafailov, Rafael and Zhou, Linqi and Lou, Aaron and Purushwalkam, Senthil and Ermon, Stefano and Xiong, Caiming and Joty, Shafiq and Naik, Nikhil},
  journal={arXiv preprint arXiv:2311.12908},
  year={2023}
}

@article{zheng2025diffusionnft,
  title={{DiffusionNFT}: Online Diffusion Reinforcement with Forward Process},
  author={Zheng, Kaiwen and Chen, Huayu and Ye, Haotian and Wang, Haoxiang and Zhang, Qinsheng and Jiang, Kai and Su, Hang and Ermon, Stefano and Zhu, Jun and Liu, Ming-Yu},
  journal={arXiv preprint arXiv:2509.16117},
  year={2025}
}

@article{xu2026pathspace,
  title={Designing Reinforcement Learning for Diffusion Models: A Unified Path-Space View},
  author={Xu, Yixian and Zhang, Yuanrui and Luo, Shengjie and Wang, Liwei and He, Di},
  journal={arXiv preprint arXiv:2608.14430},
  year={2026}
}

@article{ping2026flowdppo,
  title={{Flow-DPPO}: Divergence Proximal Policy Optimization for Flow Matching Models},
  author={Ping, Bowen and Zhou, Xiangxin and Qi, Penghui and Luo, Minnan and Bo, Liefeng and Pang, Tianyu},
  journal={arXiv preprint arXiv:2606.11025},
  year={2026}
}

@article{zhu2026crd,
  title={Diffusion Reinforcement Learning via Centered Reward Distillation},
  author={Zhu, Yuanzhi and Wang, Xi and Lathuiliere, Stephane and Kalogeiton, Vicky},
  journal={arXiv preprint arXiv:2603.14128},
  year={2026}
}

@misc{stabilityai2024sd35,
  title={Stable Diffusion 3.5 Medium},
  author={{Stability AI}},
  year={2024},
  howpublished={\url{https://huggingface.co/stabilityai/stable-diffusion-3.5-medium}}
}

@article{ghosh2023geneval,
  title={{GenEval}: An Object-Focused Framework for Evaluating Text-to-Image Alignment},
  author={Ghosh, Dhruba and Hajishirzi, Hanna and Schmidt, Ludwig},
  journal={arXiv preprint arXiv:2310.11513},
  year={2023}
}

@article{kirstain2023pickapic,
  title={Pick-a-Pic: An Open Dataset of User Preferences for Text-to-Image Generation},
  author={Kirstain, Yuval and Polyak, Adam and Singer, Uriel and Matiana, Shahbuland and Penna, Joe and Levy, Omer},
  journal={arXiv preprint arXiv:2305.01569},
  year={2023}
}

@article{xue2025dancegrpo,
  title={{DanceGRPO}: Unleashing {GRPO} on Visual Generation},
  author={Xue, Zeyue and Wu, Jie and Gao, Yu and Kong, Fangyuan and Zhu, Lingting and others},
  journal={arXiv preprint arXiv:2505.07818},
  year={2025}
}

@article{hu2021lora,
  title={{LoRA}: Low-Rank Adaptation of Large Language Models},
  author={Hu, Edward J. and Shen, Yelong and Wallis, Phillip and Allen-Zhu, Zeyuan and Li, Yuanzhi and others},
  journal={arXiv preprint arXiv:2106.09685},
  year={2021}
}

@misc{blackforest2024flux,
  title={{FLUX.1 [dev]}},
  author={{Black Forest Labs}},
  year={2024},
  howpublished={\url{https://huggingface.co/black-forest-labs/FLUX.1-dev}}
}

@article{hessel2021clipscore,
  title={{CLIPScore}: A Reference-free Evaluation Metric for Image Captioning},
  author={Hessel, Jack and Holtzman, Ari and Forbes, Maxwell and Le Bras, Ronan and Choi, Yejin},
  journal={arXiv preprint arXiv:2104.08718},
  year={2021}
}

@article{wu2023hpsv2,
  title={Human Preference Score v2: A Solid Benchmark for Evaluating Human Preferences of Text-to-Image Synthesis},
  author={Wu, Xiaoshi and Hao, Yiming and Sun, Keqiang and Chen, Yixiong and Zhu, Feng and Zhao, Rui and Li, Hongsheng},
  journal={arXiv preprint arXiv:2306.09341},
  year={2023}
}

@article{kong2024hunyuanvideo,
  title={{HunyuanVideo}: A Systematic Framework for Large Video Generative Models},
  author={Kong, Weijie and Tian, Qi and Zhang, Zijian and Min, Rox and Dai, Zuozhuo and others},
  journal={arXiv preprint arXiv:2412.03603},
  year={2024}
}

@article{liu2025videoreward,
  title={Improving Video Generation with Human Feedback},
  author={Liu, Jie and Liu, Gongye and Liang, Jiajun and Yuan, Ziyang and Liu, Xiaokun and others},
  journal={arXiv preprint arXiv:2501.13918},
  year={2025}
}

@article{wan2025video,
  title={{Wan}: Open and Advanced Large-Scale Video Generative Models},
  author={{Wan Team}},
  journal={arXiv preprint arXiv:2503.20314},
  year={2025}
}

@misc{lightx2v2025,
  title={{Wan2.1-I2V-14B-720P-StepDistill-CfgDistill-Lightx2v}},
  author={{LightX2V}},
  year={2025},
  howpublished={\url{https://huggingface.co/lightx2v/Wan2.1-I2V-14B-720P-StepDistill-CfgDistill-Lightx2v}}
}

@article{xu2023imagereward,
  title={{ImageReward}: Learning and Evaluating Human Preferences for Text-to-Image Generation},
  author={Xu, Jiazheng and Liu, Xiao and Wu, Yuchen and Tong, Yuxuan and Li, Qinkai and Ding, Ming and Tang, Jie and Dong, Yuxiao},
  journal={arXiv preprint arXiv:2304.05977},
  year={2023}
}

@article{xu2026qwenimage2rl,
  title={{Qwen-Image-2.0-RL} Technical Report},
  author={Xu, Yixian and Gao, Kaiyuan and Chen, Yuxiang and Chen, Yilei and Tang, Zecheng and others},
  journal={arXiv preprint arXiv:2606.27608},
  year={2026}
}

@article{gao2025seedance,
  title={{Seedance 1.0}: Exploring the Boundaries of Video Generation Models},
  author={Gao, Yu and Guo, Haoyuan and Hoang, Tuyen and Huang, Weilin and Jiang, Lu and others},
  journal={arXiv preprint arXiv:2506.09113},
  year={2025}
}

@article{he2025tempflowgrpo,
  title={{TempFlow-GRPO}: When Timing Matters for {GRPO} in Flow Models},
  author={He, Xiaoxuan and Fu, Siming and Zhao, Yuke and Li, Wanli and Yang, Jian and Yin, Dacheng and Rao, Fengyun and Zhang, Bo},
  journal={arXiv preprint arXiv:2508.04324},
  year={2025}
}

@inproceedings{wang2026grpoguard,
  title={{GRPO-Guard}: Mitigating Implicit Over-Optimization in Flow Matching via Regulated Clipping},
  author={Wang, Jing and Liang, Jiajun and Liu, Jie and Liu, Henglin and Liu, Gongye and Zheng, Jun and Pang, Wanyuan and Ma, Ao and Xie, Zhenyu and Wang, Xintao and Wang, Meng and Wan, Pengfei and Liang, Xiaodan},
  booktitle={Proceedings of the IEEE/CVF Conference on Computer Vision and Pattern Recognition},
  pages={5988--5998},
  year={2026}
}

@misc{wang2025coefficientspreservingsamplingreinforcementlearning,
  title={Coefficients-Preserving Sampling for Reinforcement Learning with Flow Matching},
  author={Feng Wang and Zihao Yu},
  year={2025},
  eprint={2509.05952},
  archivePrefix={arXiv},
  primaryClass={cs.CV},
  url={https://arxiv.org/abs/2509.05952}
}

@misc{xue2026advantageweightedmatchingaligning,
  title={Advantage Weighted Matching: Aligning RL with Pretraining in Diffusion Models},
  author={Shuchen Xue and Chongjian Ge and Shilong Zhang and Yichen Li and Zhi-Ming Ma},
  year={2026},
  eprint={2509.25050},
  archivePrefix={arXiv},
  primaryClass={cs.LG},
  url={https://arxiv.org/abs/2509.25050}
}

@misc{liu2022flowstraightfastlearning,
  title={Flow Straight and Fast: Learning to Generate and Transfer Data with Rectified Flow},
  author={Xingchao Liu and Chengyue Gong and Qiang Liu},
  year={2022},
  eprint={2209.03003},
  archivePrefix={arXiv},
  primaryClass={cs.LG},
  url={https://arxiv.org/abs/2209.03003}
}

@article{schulman2017proximal,
  title={Proximal Policy Optimization Algorithms},
  author={Schulman, John and Wolski, Filip and Dhariwal, Prafulla and Radford, Alec and Klimov, Oleg},
  journal={arXiv preprint arXiv:1707.06347},
  year={2017}
}
\bibliographystyle{unsrtnat}
\clearpage
\appendix
\addtocontents{toc}{\protect\setcounter{tocdepth}{2}}
\begingroup
\renewcommand{\contentsname}{Appendix Contents}
\tableofcontents
\endgroup
\clearpage
\begingroup
\setlength{\textfloatsep}{8pt plus 2pt minus 2pt}
\setlength{\intextsep}{8pt plus 2pt minus 2pt}
\setlength{\floatsep}{6pt plus 2pt minus 2pt}
\section{Additional Experimental Results}
\label{app:additional_experiments}\subsection{Scaling to FLUX.1-dev}
\label{app:t2i_flux}
\begin{table}[!htbp]
\centering\footnotesize
\caption{\textbf{FLUX.1-dev GenEval evaluation.}}
\label{tab:t2i_flux}
\setlength{\tabcolsep}{3pt}
\begin{tabular}{lr}
\toprule
Method & Eval reward\\
\midrule
Flow-GRPO-Fast & 0.8374\\
Flow-DPPO & 0.9183\\
DiffusionNFT (EMA) & 0.8490\\
\textbf{\method{}} & \textbf{0.9684}\\
\bottomrule
\end{tabular}

\end{table}

\subsection{T2V Evaluation}
\label{app:t2v_evaluation}
Table~\ref{tab:t2v_overall} reports the latest evaluation scores under matched TA:VQ:MQ $=7{:}2{:}1$ training weights. DiffusionART improves evaluation VQ and MQ by 31.4\% and 57.4\%, respectively, over the strongest baseline.

\begin{table}[H]
\centering\footnotesize
\setlength{\abovecaptionskip}{2pt}
\caption{\textbf{T2V evaluation.} Matched TA:VQ:MQ $=7{:}2{:}1$.}
\label{tab:t2v_overall}
\setlength{\tabcolsep}{4pt}
\begin{tabular}{lrrr}
\toprule
Method & VQ $\uparrow$ & MQ $\uparrow$ & TA $\uparrow$ \\
\midrule
Pretrained & 5.195 & 1.943 & 1.118 \\
Flow-GRPO-Fast & 3.149 & 1.874 & 1.667 \\
DanceGRPO & 3.922 & 1.206 & 1.685 \\
DiffusionNFT (EMA) & 6.827 & 2.683 & 2.096 \\
DiffusionNFT (on-policy) & 5.543 & 1.839 & -4.480 \\
\midrule
\rowcolor{blue!8}
DiffusionART & \textbf{8.970} & \textbf{4.224} & \textbf{2.111} \\
\bottomrule
\end{tabular}
\end{table}

\subsection{I2V Evaluation}
\label{app:i2v_evaluation}
\begin{table}[H]
\centering\footnotesize
\setlength{\abovecaptionskip}{2pt}
\caption{\textbf{I2V evaluation.} VQ:MQ $=1{:}1$.}
\label{tab:i2v_eval}
\setlength{\tabcolsep}{4pt}
\begin{tabular}{lrr}
\toprule
Method & VQ $\uparrow$ & MQ $\uparrow$ \\
\midrule
Flow-GRPO-Fast & 5.771 & 4.710 \\
DanceGRPO & 5.123 & 3.266 \\
DiffusionNFT (EMA) & 5.516 & 4.161 \\
\midrule
\textbf{DiffusionART} & \textbf{11.196} & \textbf{10.476} \\
\bottomrule
\end{tabular}
\end{table}

\subsection{Multi-Reward Optimization for T2I}
\label{app:t2i_multi}
We jointly optimize GenEval, PickScore, CLIPScore, and HPSv2.1 on a single SD3.5-M policy for 500 training steps. Figure~\ref{fig:t2i_scaling_curves} reports each reward separately, comparing how the methods balance compositional accuracy, image--text alignment, and human preference under a shared multi-reward objective.

\begin{figure}[!htbp]
\centering
\setlength{\abovecaptionskip}{2pt}
\setlength{\belowcaptionskip}{0pt}
\includegraphics[width=\linewidth]{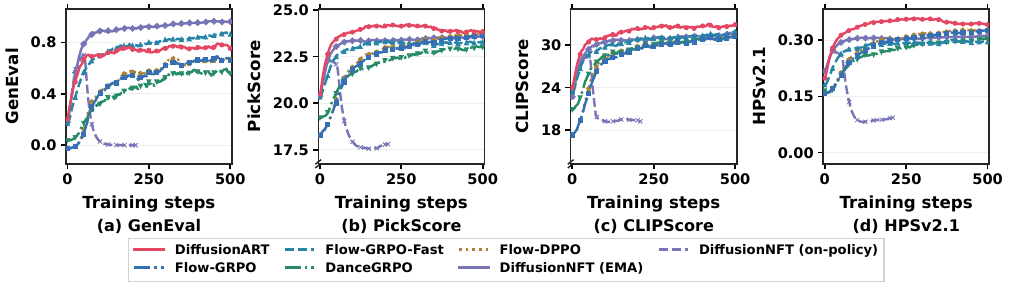}
\vspace{0pt}
\caption{\protect\centering Training reward curves for four jointly optimized objectives on SD3.5-M over 500 steps. PickScore and CLIPScore are shown as unscaled model logits.}
\label{fig:t2i_scaling_curves}
\end{figure}

\subsection{Alternative Reward Weights for T2V}
\label{app:t2v_matched_rewards}

\begin{figure}[!htbp]
\centering
\includegraphics[width=\linewidth]{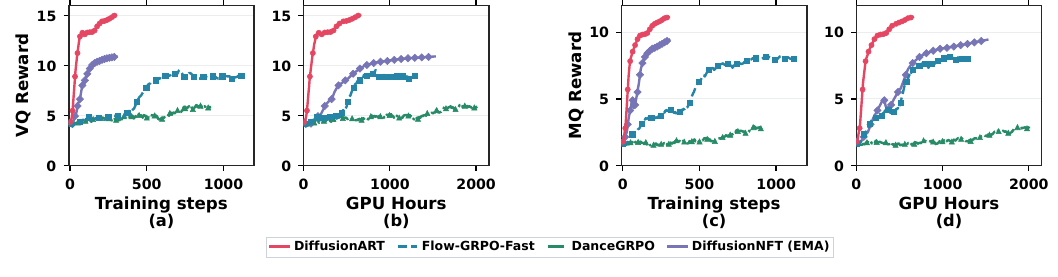}
\setlength{\abovecaptionskip}{2pt}
\setlength{\belowcaptionskip}{0pt}
\caption{\textbf{T2V with matched VQ:MQ $=1{:}1$ weights.} The left pair shows VQ and the right pair MQ, each versus training steps and GPU hours.}
\label{fig:t2v_matched_reward_controls}
\end{figure}

Figure~\ref{fig:t2v_matched_reward_controls} compares all methods with VQ:MQ $=1{:}1$ weights and no TA reward. DiffusionART reaches a training VQ of 6 in 20 steps, compared with 50 for DiffusionNFT (EMA) and 427 for Flow-GRPO-Fast: \textbf{2.5$\times$ and 21.4$\times$ fewer training steps}. It uses 44.2 GPU hours, versus 250.2 and 502.5, giving \textbf{5.7$\times$ and 11.4$\times$ GPU-hour speedups}. DanceGRPO reaches this level after 821 steps and 1805.5 GPU hours. Using the mean raw VQ/MQ reward, $(\mathrm{VQ}+\mathrm{MQ})/2$, DiffusionART reaches the late-stage reward levels of DiffusionNFT (EMA) and Flow-GRPO-Fast with $12.02\times$ and $11.72\times$ fewer GPU hours, respectively (Figure~\ref{fig:teaser}). Its mean reward over the final 50 training steps is 13.018, compared with 10.105 and 8.510 for the two baselines, representing improvements of 28.8\% and 53.0\%.

For MQ of 3, DiffusionART needs 19 steps and 42.9 GPU hours, compared with 33 steps and 165.6 GPU hours for NFT, and 117 steps and 149.4 GPU hours for Fast. This gives \textbf{1.7$\times$/6.2$\times$ step speedups} and \textbf{3.9$\times$/3.5$\times$ GPU-hour speedups} over NFT/Fast. Thus, DiffusionART also learns efficiently when the objective focuses only on VQ and MQ.

\medskip
\noindent\textbf{Direct optimization with a differentiable reward.}
We originally followed the undifferentiable reward model setting (i.e. we can only access the scalar reward value, not including the gradient of the reward model). However, methods like ReFL ~\citep{xu2023imagereward} utilize the gradient of a VLM reward model to help training. We also compare DiffusionART with ReFL under matched VQ:MQ $=1{:}1$ weights (Figure~\ref{fig:t2v_refl_matched}). ReFL uses 32 prompts and one video per prompt ($K=1$), with gradients through the frozen reward model and VAE to the final denoising step; earlier steps are detached. It is trained for 400 updates at a learning rate of $10^{-5}$, without KL regularization or EMA. 

DiffusionART improves both rewards with fewer optimizer updates, while ReFL uses fewer samples per update ($K=1$ versus $K=8$ of DiffusionART) but requires reward-model backpropagation; DiffusionART uses reward values alone.

\begin{figure}[!htbp]
\centering
\includegraphics[width=\linewidth]{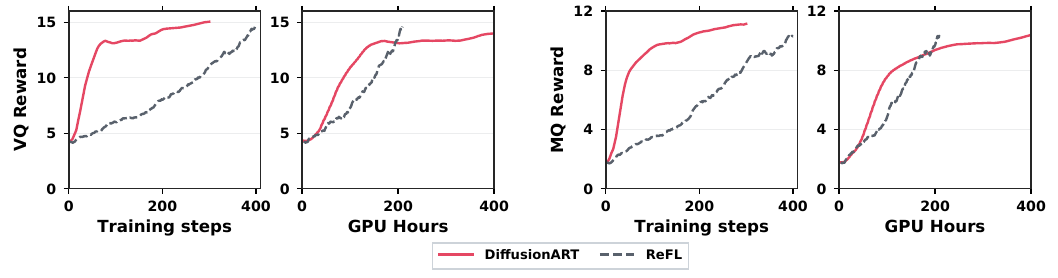}
\setlength{\abovecaptionskip}{2pt}
\caption{\textbf{DiffusionART versus ReFL with matched VQ:MQ $=1{:}1$ weights.} The left pair shows VQ and the right pair MQ, each versus training steps and GPU hours.}
\label{fig:t2v_refl_matched}
\end{figure}

\subsection{Trust-Region Masking for Image Generation}
\label{app:mask_image}
We remove only the trust-region mask, retaining normalized covariance guidance, timestep weighting, and two disjoint MSE updates. On the three SD3.5-M benchmarks and FLUX.1-dev, training remains effective without masking and can improve slightly faster (Figure~\ref{fig:mask_image}). On FLUX.1-dev, the no-mask run reaches a training GenEval score of 0.971 by step 180. The benefit of masking is more pronounced for video generation (Figure~\ref{fig:mask_ablation}); Appendix~\ref{app:mask_diagnostics} examines its stabilizing effect beyond reward curves.

\begin{figure}[!htbp]
\centering
\includegraphics[width=\linewidth]{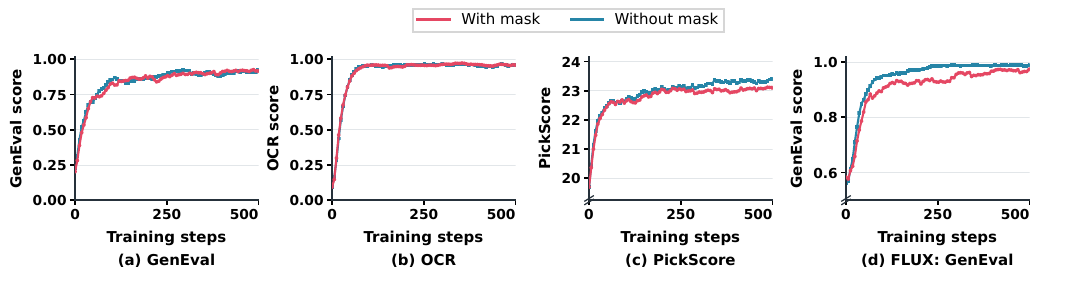}
\setlength{\abovecaptionskip}{2pt}
\caption{\textbf{Image trust-region mask ablation.} (a--c) SD3.5-M; (d) FLUX.1-dev. }
\label{fig:mask_image}
\end{figure}

\subsection{Trust-Region Masking Diagnostics}
\label{app:mask_diagnostics}
We further inspect the matched HunyuanVideo comparison with VQ:MQ weights of
$1{:}1$. Without masking, training exhibits larger late-stage gradient and
velocity-drift spikes (Figure~\ref{fig:mask_diagnostics}), complementing the
reward curves in Figure~\ref{fig:mask_ablation}(a--b).

Although the mask constrains only the second update of each rollout, its stabilizing effect extends to subsequent first updates. Over the shared first 300 training steps, the maximum first-update gradient norm is 1.96 with masking, compared with 3.33 without it. The velocity drift measured before second updates, which reflects displacement caused by the first update, also shows much larger spikes without masking (Figure~\ref{fig:mask_diagnostics}). With masking, the velocity drift score stays a healthy level. These observations suggest that restricting second updates helps keep later first updates well behaved, rather than merely suppressing gradients at the masked update.

\begin{figure}[!htbp]
\centering
\includegraphics[width=\linewidth]{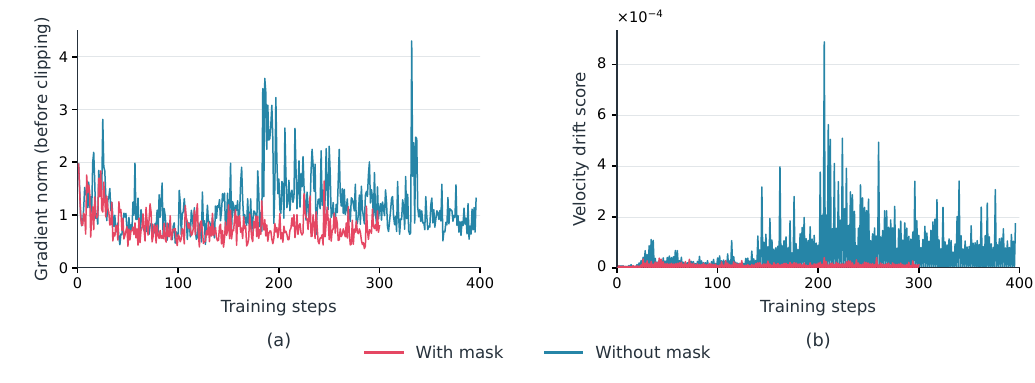}
\setlength{\abovecaptionskip}{2pt}
\caption{\textbf{Trust-region diagnostics on HunyuanVideo-13B.}
(a) Parameter gradient norm before clipping.
(b) Scheduler-reference-weighted velocity drift energy, averaged over training
states before each update.}
\label{fig:mask_diagnostics}
\end{figure}

\FloatBarrier
\endgroup
\section{Proofs}
\label{app:proofs}

\subsection{Conditional Covariance Identity for Ideal NFT}
\label{app:nft_covariance_proof}

\noindent\textbf{Lemma~\ref{lem:nft_covariance} (restated).}
At a fixed supported state $s_t$, let $Q_{\mathrm{old}}$ denote $Q_b$ with $b=\mathrm{old}$, i.e., the joint law obtained by forward noising the rollout endpoints $x_0\sim p_{\mathrm{old}}(\cdot\mid c)$, and let
$q:=\E_{Q_{\mathrm{old}}}[r(x_0,c)\mid s_t]$ and
$g:=\operatorname{Cov}_{Q_{\mathrm{old}}}
[r(x_0,c),\allowbreak u_t\mid s_t]$. If the tilted distributions and moments exist
and $0<q<1$, then
\begin{align*}
q\bigl(v_{\mathrm{fwd}}^+-v_{\mathrm{fwd}}^{\mathrm{old}}\bigr)
&=(1-q)\bigl(v_{\mathrm{fwd}}^{\mathrm{old}}
-v_{\mathrm{fwd}}^-\bigr)=g,
\end{align*}
where all fields are evaluated at $s_t$. Hence the positive and negative
branches point along the same reward-tilt direction $g$.

\noindent\textit{Proof.} Let
$Z_+(c)=\E_{P_{\mathrm{old}}}[r\mid c]>0$ and
$Z_-(c)=\E_{P_{\mathrm{old}}}[1-r\mid c]>0$ normalize $P_+$ and $P_-$. Since $P_\pm$ reweight only $x_0$, while $\epsilon$ is independent of $x_0$ and $(x_t,u_t)$ are functions of $(x_0,\epsilon)$, the joint laws of $(x_0,\epsilon,x_t,u_t)$ satisfy
\begin{align}
Q_+(\cdot\mid t,c)
&=\frac{r(x_0,c)}{Z_+(c)}Q_{\mathrm{old}}(\cdot\mid t,c),\\
Q_-(\cdot\mid t,c)
&=\frac{1-r(x_0,c)}{Z_-(c)}Q_{\mathrm{old}}(\cdot\mid t,c).
\end{align}
Marginalizing onto $x_t$ and using
$q=\E_{Q_{\mathrm{old}}}[r(x_0,c)\mid s_t]$ gives
\begin{align}
Q_+(x_t\mid t,c)
&=\frac{q}{Z_+(c)}Q_{\mathrm{old}}(x_t\mid t,c),\\
Q_-(x_t\mid t,c)
&=\frac{1-q}{Z_-(c)}Q_{\mathrm{old}}(x_t\mid t,c).
\end{align}
Bayes' rule therefore yields
\begin{align}
Q_+(\cdot\mid s_t)
&=\frac{r(x_0,c)}{q}Q_{\mathrm{old}}(\cdot\mid s_t),\\
Q_-(\cdot\mid s_t)
&=\frac{1-r(x_0,c)}{1-q}Q_{\mathrm{old}}(\cdot\mid s_t).
\end{align}
Taking conditional expectations of $u_t$ gives
\begin{align}
q v_{\mathrm{fwd}}^+(s_t)
&=\E_{Q_{\mathrm{old}}}[r u_t\mid s_t],\\
\bigl(1-q\bigr)v_{\mathrm{fwd}}^-(s_t)
&=\E_{Q_{\mathrm{old}}}[(1-r)u_t\mid s_t].
\end{align}
Since $v_{\mathrm{fwd}}^{\mathrm{old}}(s_t)=
\E_{Q_{\mathrm{old}}}[u_t\mid s_t]$,
\begin{align}
q\bigl(v_{\mathrm{fwd}}^+(s_t)-v_{\mathrm{fwd}}^{\mathrm{old}}(s_t)\bigr)
&=\E_{Q_{\mathrm{old}}}[r u_t\mid s_t]
-q\E_{Q_{\mathrm{old}}}[u_t\mid s_t]
=g,\\
\bigl(1-q\bigr)
\bigl(v_{\mathrm{fwd}}^{\mathrm{old}}(s_t)-v_{\mathrm{fwd}}^-(s_t)\bigr)
&=\bigl(1-q\bigr)\E_{Q_{\mathrm{old}}}[u_t\mid s_t]
-\E_{Q_{\mathrm{old}}}[(1-r)u_t\mid s_t]
=g.
\end{align}
This proves both the collinearity and the conditional covariance identity.

\subsection{NFT Force and Restoration Decompositions}
\label{app:nft_decomposition_proofs}

\subsubsection{Equivalent MSE Objective and On-Policy Descent Direction}
\label{app:nft_mse_equivalence}
\noindent\textbf{Preliminary claims (restated).}
For $\beta>0$, freeze $\mathcal D_{\mathrm{old}}$ and treat $r$, $u_t$, and $v_{\mathrm{old}}$ as independent of $\theta$. With $v_{\mathrm{tgt}}=v_{\mathrm{old}}+\frac{2r-1}{\beta}(u_t-v_{\mathrm{old}})$, the NFT objective satisfies
\begin{equation*}
\mathcal L_{\mathrm{NFT}}=\beta^2\mathbb E_{\mathcal D_{\mathrm{old}}}\!\left[\lVert v_\theta-v_{\mathrm{tgt}}\rVert_2^2\right]+\mathrm{const},
\end{equation*}
where $\mathrm{const}$ is independent of $\theta$. At the on-policy initialization $v_\theta=v_{\mathrm{old}}$, the single-sample output-space descent direction is $-\nabla_{v_\theta}\ell_{\mathrm{NFT}}=2\beta(2r-1)(u_t-v_{\mathrm{old}})$.

\begin{proof}
Let $\ell_{\mathrm{NFT}}$ denote the weighted positive--negative loss for one sample, so $\mathcal L_{\mathrm{NFT}}=\mathbb E_{\mathcal D_{\mathrm{old}}}[\ell_{\mathrm{NFT}}]$. Write $d=v_\theta-v_{\mathrm{old}}$, $h=u_t-v_{\mathrm{old}}$, and $A=2r-1$. Expanding and completing the square gives
\begin{align*}
\ell_{\mathrm{NFT}}
&=r\lVert\beta d-h\rVert_2^2+(1-r)\lVert-\beta d-h\rVert_2^2\\
&=\beta^2\lVert d\rVert_2^2-2\beta A\langle d,h\rangle+\lVert h\rVert_2^2\\
&=\beta^2\left\lVert d-\frac{A}{\beta}h\right\rVert_2^2+(1-A^2)\lVert h\rVert_2^2.
\end{align*}
Since $v_{\mathrm{tgt}}=v_{\mathrm{old}}+(A/\beta)h$, the first term is $\beta^2\lVert v_\theta-v_{\mathrm{tgt}}\rVert_2^2$. Using $1-A^2=4r(1-r)$ and taking the expectation over the frozen $\mathcal D_{\mathrm{old}}$ proves Equation~\ref{eq:nft_objective}, with $\mathrm{const}=\mathbb E_{\mathcal D_{\mathrm{old}}}[4r(1-r)\lVert u_t-v_{\mathrm{old}}\rVert_2^2]$ independent of $\theta$.

Differentiating the same sample loss with respect to $v_\theta$ yields
\begin{equation*}
-\nabla_{v_\theta}\ell_{\mathrm{NFT}}=2\beta A h-2\beta^2d.
\end{equation*}
At the on-policy initialization $v_\theta=v_{\mathrm{old}}$, we have $d=0$, so the output-space descent direction is $2\beta(2r-1)(u_t-v_{\mathrm{old}})$, proving the stated proportionality.
\end{proof}

\subsubsection{Reward Guidance and Self-Consistency Force}
\noindent\textbf{Proposition~\ref{prop:nft_contamination} (restated).}
Let $\mathcal L_{\mathrm{NFT}}(s_t)$ denote
Equation~\ref{eq:nft_objective} conditioned on $s_t$, $q$ and $g$ follow Lemma~\ref{lem:nft_covariance}, and $e=v_{\mathrm{fwd}}^{\mathrm{old}}-v_{\mathrm{rvs}}^{\mathrm{old}}$. At the pure on-policy
update point $v_\theta=v_{\mathrm{rvs}}^{\mathrm{old}}$ and for $\beta>0$,
its normalized output-space force is
\begin{equation*}
\mathcal F_{\mathrm{NFT}}(s_t)
:=-\frac{1}{2\beta}
\left.\nabla_{v_\theta(s_t)}\mathcal L_{\mathrm{NFT}}(s_t)
\right|_{v_\theta=v_{\mathrm{rvs}}^{\mathrm{old}}}
=2g+(2q-1)e.
\end{equation*}

\noindent\textit{Proof.} 
Fix $s_t$, and write
$d=v_\theta(s_t)-v_{\mathrm{rvs}}^{\mathrm{old}}(s_t)$ and
$h=u_t-v_{\mathrm{rvs}}^{\mathrm{old}}(s_t)$. The conditional objective is
\begin{align}
\mathcal L_{\mathrm{NFT}}(s_t)
&=\E_{Q_{\mathrm{old}}}\!\left[r\lVert\beta d-h\rVert_2^2
 +(1-r)\lVert-\beta d-h\rVert_2^2\mid s_t\right]\\
&=\beta^2\lVert d\rVert_2^2
-2\beta\left\langle d,
\E_{Q_{\mathrm{old}}}[(2r-1)h\mid s_t]\right\rangle
+\E_{Q_{\mathrm{old}}}[\lVert h\rVert_2^2\mid s_t].
\end{align}
At the pure on-policy update point, $d=0$. Therefore,
\begin{align}
\mathcal F_{\mathrm{NFT}}(s_t)
&=-\frac{1}{2\beta}
\left.\nabla_d\mathcal L_{\mathrm{NFT}}(s_t)\right|_{d=0}\\
&=\E_{Q_{\mathrm{old}}}[(2r-1)h\mid s_t].
\end{align}
Because $\E_{Q_{\mathrm{old}}}[h\mid s_t]
=v_{\mathrm{fwd}}^{\mathrm{old}}-v_{\mathrm{rvs}}^{\mathrm{old}}=e$ and
$\operatorname{Cov}_{Q_{\mathrm{old}}}(r,h\mid s_t)
=\operatorname{Cov}_{Q_{\mathrm{old}}}(r,u_t\mid s_t)=g$,
\begin{align}
\E_{Q_{\mathrm{old}}}[(2r-1)h\mid s_t]
&=2\operatorname{Cov}_{Q_{\mathrm{old}}}(r,h\mid s_t)
 +(2\E_{Q_{\mathrm{old}}}[r\mid s_t]-1)
 \E_{Q_{\mathrm{old}}}[h\mid s_t]\\
&=2g+(2q-1)e,
\end{align}
which proves Equation~\ref{eq:nft_force_decomposition}.

\subsubsection{Guidance and Quadratic Restoration}
\noindent\textbf{Proposition~\ref{prop:nft_guidance_restore} (restated).} Let $d=v_\theta-v_{\mathrm{old}}$ and $h=u_t-v_{\mathrm{old}}$.
\begin{equation*}
\mathcal L_{\mathrm{NFT}}
=\E_{\mathcal D_{\mathrm{old}}}\!\left[-2\beta(2r-1)
\langle d,h\rangle
+\beta^2\lVert d\rVert_2^2\right]+\mathrm{const},
\end{equation*}
where $\mathrm{const}$ is independent of the trainable parameters.

\noindent\textit{Proof.} 
Using the same $d$ and $h$, the positive and negative residuals are
$\beta d-h$ and $-\beta d-h$, respectively. Expanding their weighted squared
norms gives
\begin{align}
r\lVert\beta d-h\rVert_2^2
+(1-r)\lVert-\beta d-h\rVert_2^2
=\beta^2\lVert d\rVert_2^2
-2\beta(2r-1)\langle d,h\rangle
+\lVert h\rVert_2^2.
\end{align}
Taking $\E_{\mathcal D_{\mathrm{old}}}$ and absorbing the final term into the constant proves
Equation~\ref{eq:nft_guidance_restore_decomposition}.

\subsection{Covariance as the First-Order Reward Tilt}
\label{app:art_details}
\noindent\textbf{Lemma~\ref{lem:art_tilt} (restated).}
If the relevant moments exist and differentiation can be
exchanged with expectation near $\tau=0$, then
$\left.\frac{\partial}{\partial\tau}\E_{p_\tau}[x_0\mid c]\right|_{\tau=0}
=\operatorname{Cov}_{p_{\mathrm{old}}}(R,x_0\mid c)$.

\noindent\textit{Proof.} 
Fix $c$ and suppress it below. Write
$M(\tau)=\E_{p_{\mathrm{old}}}[x_0e^{\tau R}]$ and
$Z(\tau)=\E_{p_{\mathrm{old}}}[e^{\tau R}]$.
The normalized tilted density is $p_\tau(x_0)=p_{\mathrm{old}}(x_0)e^{\tau R(x_0)}/Z(\tau)$,
so its endpoint mean is $M(\tau)/Z(\tau)$.
Assume differentiation under these expectations is valid; a sufficient
condition is $\E[(1+\lVert x_0\rVert)(1+|R|)e^{a|R|}]<\infty$
for some $a>0$. Since $Z(0)=1$, the quotient rule gives
\begin{equation}
\left.\frac{\partial}{\partial\tau}\frac{M(\tau)}{Z(\tau)}\right|_0
=\E[x_0R]-\E[x_0]\E[R]
=\operatorname{Cov}(R,x_0).
\end{equation}

\medskip
\noindent\textbf{From shared guidance to statewise correction.}
\label{app:art_guidance_approximation}
Lemma~\ref{lem:art_tilt} motivates a shared reward-improving direction computed from the $K$ endpoints already sampled per prompt.
This construction avoids state-conditional posterior sampling and excludes the NFT self-consistency term by construction.
However, a prompt-level direction need not match the posterior guidance at every state, and different states can respond differently to the same parameter update.
Without statewise correction, the resulting deviations can accumulate, providing a explanation for the gradual deterioration with a single update in Figure~\ref{fig:t2i_ablation_curves}.
The second MSE update complements shared guidance with local feedback: its restoring force grows with each state's displacement from the rollout prediction.
This controls drift without requiring posterior sampling, combining scalable guidance with statewise stabilization.

\section{Clarification of Methods and Experiments}
\label{app:clarifications}

\subsection{Reverse-Process Policy Optimization}
\label{app:flowgrpo_objective}

FlowGRPO treats denoising as a finite-horizon policy. For each prompt, it samples
$K$ stochastic reverse trajectories, group-normalizes their terminal rewards,
and applies clipped PPO to every transition~\citep{liu2025flowgrpo}. This
directly supervises sampled states but requires tractable transition densities
and stored trajectories.

For reverse trajectory
$x_{T:0}^{(i)}\sim P_{\mathrm{old}}^{\mathrm{rev}}(\cdot\mid c)$,
FlowGRPO assigns the group-normalized
terminal advantage $A_i$ to every transition and maximizes
\begin{equation}
\begin{aligned}
\mathcal J_{\mathrm{FG}}
=\E_{i,k}\!\Big[&
\min\!\big(\rho_{i,k}A_i,\,
\operatorname{clip}(\rho_{i,k},1-\epsilon_{\mathrm{clip}},
1+\epsilon_{\mathrm{clip}})A_i\big)\\
&-\lambda_{\mathrm{KL}}D_{\mathrm{KL}}\!\big(
p_\theta(\cdot\mid x_k^{(i)},c)\,\|\,
p_{\mathrm{ref}}(\cdot\mid x_k^{(i)},c)\big)\Big].
\end{aligned}
\end{equation}
where $\rho_{i,k}=p_\theta(x_{k-1}^{(i)}\mid x_k^{(i)},c)/
p_{\mathrm{old}}(x_{k-1}^{(i)}\mid x_k^{(i)},c)$ and
$\E_{i,k}$ also averages over prompts and old-policy trajectories.
Here $\epsilon_{\mathrm{clip}}>0$ is the PPO clip radius,
$\lambda_{\mathrm{KL}}\geq0$ weights the penalty, and $p_{\mathrm{ref}}$ is a
fixed reference transition. This requires the stochastic
reverse transition used for rollout to expose a tractable density.

\subsection{Reference Transitions for ODE Trust-Region Masking}
\label{app:art_mask}

We strictly follow Flow-GRPO's ODE-to-SDE construction~\citep{liu2025flowgrpo}; we refer to that work for the derivation. With reverse step size $h=t-t_{\rm next}>0$ and state $s_t = (x_t, t, c)$, where $x_t$ is the current latent, the reference transition has mean and variance when $1>t>0$:
\begin{equation}
 \begin{aligned}
 \mu(s_t,v)
 &=x_t+h\left[\left(1+\frac{\nu^2}{2}\right)v-\frac{\nu^2}{2(1-t)}x_t\right],\\
 \sigma_{\rm ref}^2(t)&=\nu^2\frac{t}{1-t}h.
 \end{aligned}
 \label{eq:art_reference_transition}
\end{equation}
Substituting $v_\theta(s_t)$ and $v_\theta^{(0)}(s_t)$ gives the current and old reference means, respectively, hence $\mu(s_t,v_\theta(s_t))-\mu(s_t,v_\theta^{(0)}(s_t))=h(1+\nu^2/2)d(s_t)$. At $t=1$, we use the endpoint convention $\mu(s_t,v)=x_t+hv$ and $\sigma_{\rm ref}^2(1)=\nu^2$.

We use $\nu=0.8$ and $\delta=10^{-5}$ without further tuning. The mask is recomputed before each update and treated as constant during backpropagation.

\subsection{Pairwise Feedback: Identity and Implementation}
\label{app:art_pairwise}
\noindent\textbf{Pairwise covariance identity.}
\label{cor:art_pairwise}
For $K$ endpoints forming $M=K/2$ disjoint pairs, with pairwise labels $y_i\in\{0,\tfrac12,1\}$ replacing $R_i$ in Equation~\ref{eq:art_empirical_cov}, use population reduction $1/K$ globally and $1/2$ within each pair. The global
covariance equals the average of the $M$ pair covariances:
$\overline{\operatorname{Cov}}=(2K)^{-1}\sum_{\substack{j=1\\j\,\mathrm{non\text{-}tied}}}^{M}
(x_{0,j}^{\mathrm{win}}-x_{0,j}^{\mathrm{lose}})$.
Equivalently, the coefficient is $1/(4M)$. Tied pairs contribute zero. For $K=2$, the covariance reduces to one quarter of the winner-minus-loser difference: after normalization, the guidance simply points from the worse sample to the better one. A tie yields zero guidance.

\noindent\textit{Proof.} Every pair has label mean $1/2$, including ties, so the global label mean is
also $1/2$. Thus centering endpoints changes no cross term:
\begin{align}
\overline{\operatorname{Cov}}&=\frac1K\sum_{i=1}^{K}(y_i-\tfrac12)(x_0^{(i)}-\bar x_{0,c})
=\frac1K\sum_{i=1}^{K}(y_i-\tfrac12)x_0^{(i)}\\
&=\frac1{2K}\sum_{\substack{j=1\\j\,\mathrm{non\text{-}tied}}}^{M}
\bigl(x_{0,j}^{\mathrm{win}}-x_{0,j}^{\mathrm{lose}}\bigr).
\end{align}
A non-tied pair's covariance, computed with denominator two, is one quarter
of its winner-minus-loser difference. Averaging those pair covariances gives
the same expression; tied pairs contribute zero. The pairwise labels enter
the same covariance operator as scalar rewards, establishing the stated
specialization. The coefficient $\frac{1}{2K}$ doesn't matter since the covariance will be normalized to form the guidance.

\par\addvspace{8pt}
\noindent\begin{minipage}[t]{0.55\linewidth}
\vspace{0pt}
\textbf{Guidance assignment.}\label{app:pairwise_assignment}
We compare prompt-global and pair-local guidance on GenEval, using 24 images per prompt to form 12 disjoint pairs. Prompt-global guidance aggregates pair covariances before normalization and applies the shared direction to all trajectories. Pair-local guidance normalizes each pair's covariance separately and applies it only to that pair. At step 400, prompt-global guidance reaches a held-out GenEval score of \textbf{0.91}, versus \textbf{0.78} for pair-local guidance. It also achieves higher training rewards (Figure~\ref{fig:t2i_pair_assignment_appendix}), showing that the better practice is pooling the pairs of the same prompt together before computing the covariance to reduce the variance.
\end{minipage}\hfill
\begin{minipage}[t]{0.41\linewidth}
\vspace{0pt}
\centering
\includegraphics[width=\linewidth]{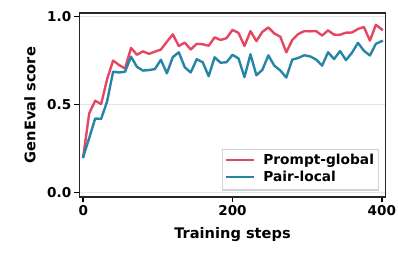}
\setlength{\abovecaptionskip}{2pt}
\captionof{figure}{Guidance assignment on GenEval.}
\label{fig:t2i_pair_assignment_appendix}
\end{minipage}
\par\addvspace{8pt}

\begin{algorithm}[H]
\caption{DiffusionART-Pairwise: one training iteration}\label{alg:art_pairwise}
\small
\begin{algorithmic}[1]
\State Sample with an ODE solver using $v_\theta^{(0)}=v_\theta$. \hfill\textcolor{black!50}{\textbf{Rollout Stage}}
\Statex \hspace{\algorithmicindent}For each prompt $c$, collect an even number $K$ of endpoints $x_0^{(i)}$, states $s_t$, and predictions $v_\theta^{(0)}(s_t)$.
\State Form $M=K/2$ disjoint pairs per prompt; query the pairwise reward model.
\State Assign pairwise rewards $y_i=1$ to winners, $y_i=0$ to losers, and $y_i=\tfrac12$ for ties.
\State Compute covariance guidance by Equation~\ref{eq:art_empirical_cov}, replacing scalar rewards $R_i$ with pairwise rewards $y_i$.

\State Freeze $v_{\mathrm{tgt}}(s_t)=v_\theta^{(0)}(s_t)+G(c)$ for each state.
\Statex \leavevmode\leaders\hbox{\rule[0.5ex]{2pt}{0.3pt}\hskip2pt}\hfill\mbox{}
\State Split the samples into two equal-sized disjoint minibatches $B_1,B_2$. \hfill\textcolor{black!50}{\textbf{Training Stage}}
\State For each $B\in\{B_1,B_2\}$, recompute $w(s_t)$ and update $\theta$ with one step on
\Statex \hspace{\algorithmicindent}$\mathcal L(B;\theta)=\dfrac{1}{|B|}\sum_{s_t\in B}w(s_t)\operatorname{MSE}(v_\theta(s_t),v_{\mathrm{tgt}}(s_t))$.

\end{algorithmic}
\end{algorithm}

\clearpage
\section{Visualization}
\label{app:visualization}
\begin{figure}[H]
\centering
\includegraphics[width=\linewidth]{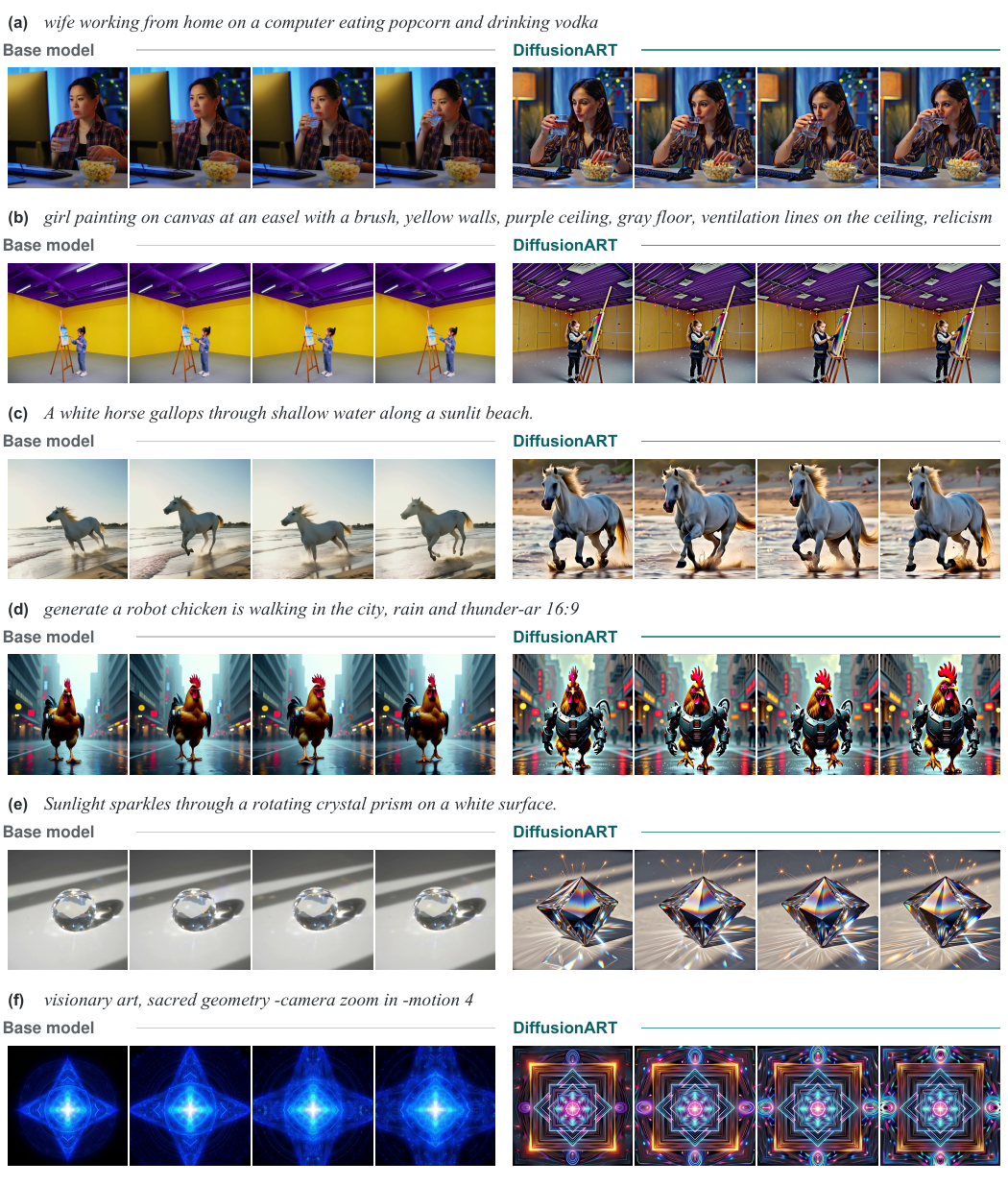}
\caption{\textbf{Text-to-video generation before and after DiffusionART.} Each pair uses the same prompt, seed, and sampling settings. Four chronologically ordered frames are shown at identical frame positions within each pair: pretrained outputs on the left and DiffusionART outputs on the right.}
\label{fig:t2v_qualitative}
\end{figure}
\clearpage

\clearpage
\begin{figure}[H]
\centering
\includegraphics[width=\linewidth]{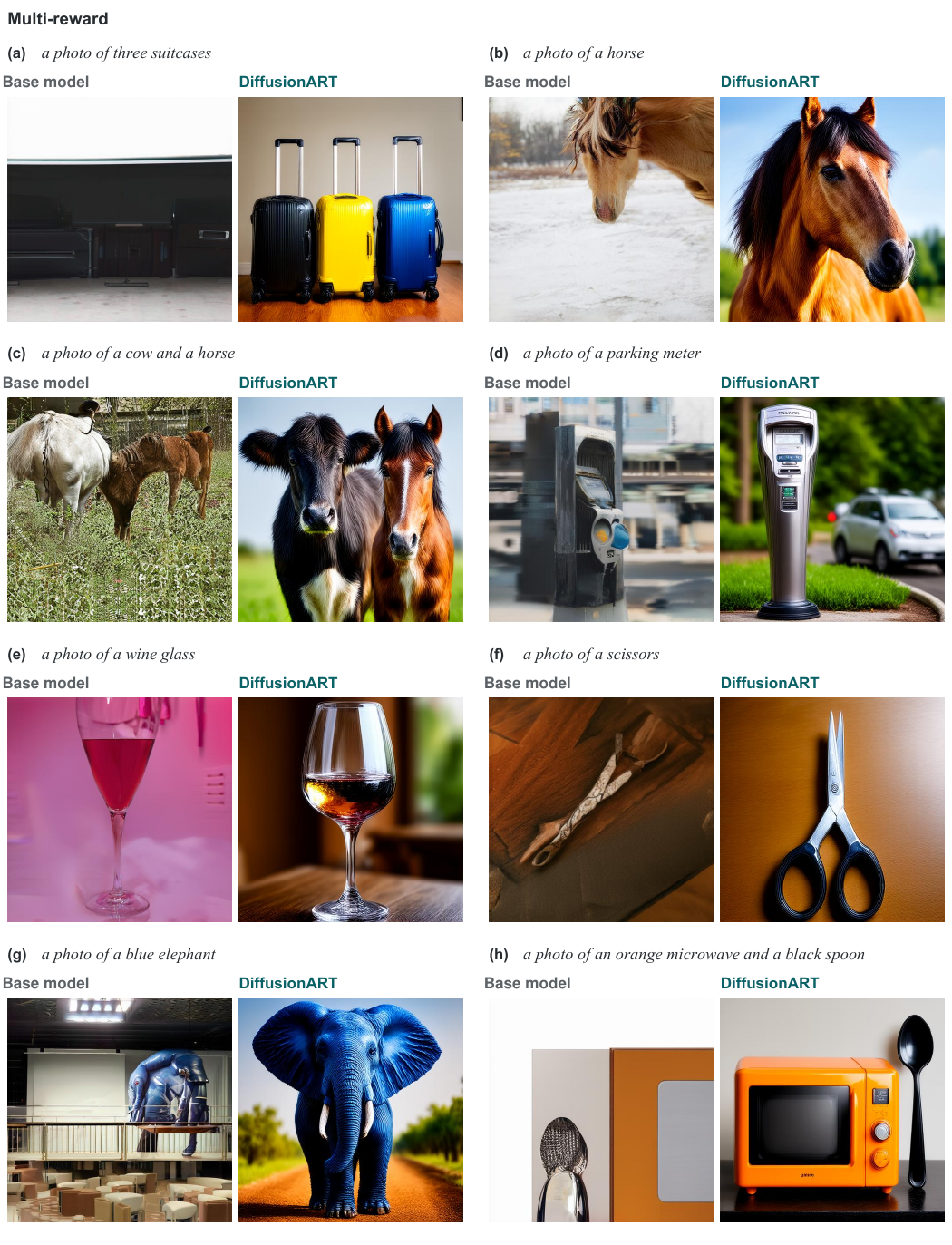}
\caption{\textbf{Text-to-image generation with multi-reward optimization.} Selected SD3.5-M outputs before and after DiffusionART, jointly optimized with GenEval, CLIPScore, PickScore, and HPSv2. Each pair shows the same prompt, with pretrained outputs on the left and DiffusionART outputs on the right.}
\label{fig:t2i_qualitative_multi}
\end{figure}
\clearpage
\begin{figure}[H]
\centering
\includegraphics[width=\linewidth]{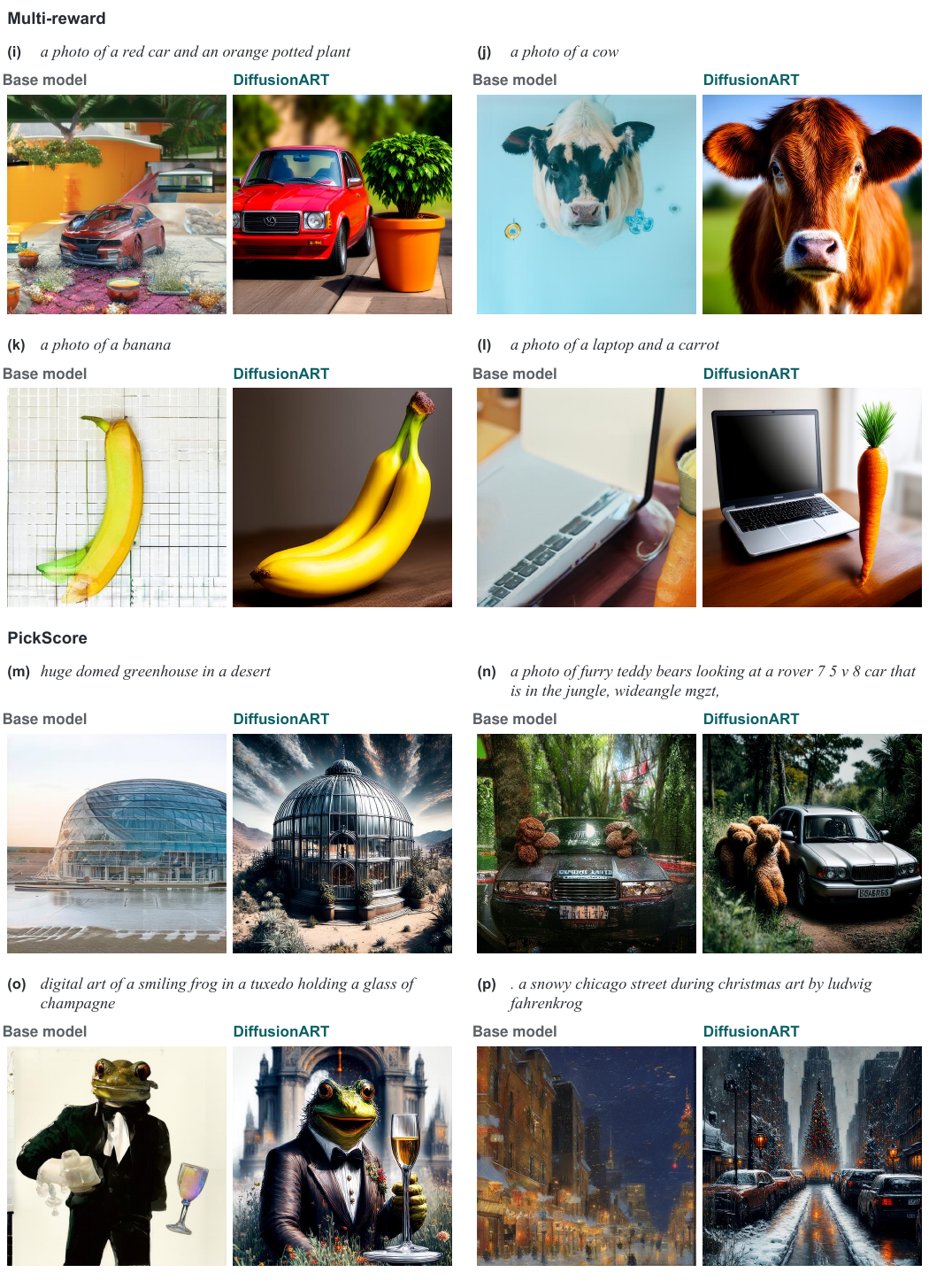}
\caption{\textbf{Additional text-to-image comparisons.} The top two rows use multi-reward optimization; the bottom two use PickScore. Each pair shows pretrained and DiffusionART outputs under the same prompt.}
\label{fig:t2i_qualitative_pick}
\end{figure}
\FloatBarrier
\clearpage

\end{document}